\definecolor{Gray}{gray}{0.9}
\newtheorem{assumption}[theorem]{Assumption}
\newcommand{\ee}{\mathbb{E}}
\newcommand{\prob}{\mathbb{P}}
\newcommand{\norm}[1]{\left\lVert#1\right\rVert}
\newcommand{\qed}{\hfill \ensuremath{\blacksquare}}
\newcommand{\fqed}{\tag*{$\blacksquare$}}
\DeclarePairedDelimiter\ceil{\lceil}{\rceil}
\newif\ifincludeappendix
\title[Provably Efficient Exploration in Constrained RL: Posterior Sampling Is All You Need]{Provably Efficient Exploration in Constrained Reinforcement Learning: Posterior Sampling Is All You Need}
 \addr \textit{Eindhoven University of Technology}, Eindhoven, the Netherlands \\
 \addr \textit{Jheronimus Academy of Data Science}, ‘s-Hertogenbosch, the Netherlands\\
 \addr \textit{Tilburg University}, Tilburg, the Netherlands
\begin{document}

\maketitle

\begin{abstract}%
  We present a new algorithm based on posterior sampling for learning in Constrained Markov Decision Processes (CMDP) in the infinite-horizon undiscounted setting. 
  The algorithm achieves near-optimal regret bounds while being advantageous empirically compared to the existing algorithms.
  Our main theoretical result is a Bayesian regret bound for each cost component of $\tilde{O} (HS \sqrt{AT})$ for any communicating CMDP with $S$ states, $A$ actions, and bound on the hitting time $H$. This regret bound matches the lower bound in order of time horizon $T$ and is the best-known regret bound for communicating CMDPs in the infinite-horizon undiscounted setting. Empirical results show that, despite its simplicity, our posterior sampling algorithm outperforms the existing algorithms for constrained reinforcement learning.%
\end{abstract}

\begin{keywords}%
  constrained reinforcement learning, posterior sampling, Bayesian regret%
\end{keywords}

\section{Introduction}

Reinforcement learning (RL) refers to the problem of learning by trial and error in sequential decision-making systems based on the scalar signal aiming to minimize the total cost accumulated over time.
In many situations, however, the desired properties of the agent behavior are better described using constraints, as a single objective might not suffice to explain the real-life setting. For example, a robot should not only fulfill its task but should also control its wear and tear by limiting the torque exerted on its motors \citep{tessler2018reward}; recommender platforms should not only focus on revenue growth but also optimize users long-term engagement \citep{asfar2021_RLforRS:survey}; and autonomous driving vehicles should reach the destination in a time and fuel-efficient manner while obeying traffic rules \cite{Le2019BatchPL}. A natural approach for handling such cases is specifying the problem using multiple objectives, where one objective is optimized subject to constraints on the others.

A typical way of formulating the constrained RL problem is a Constrained Markov Decision Process (constrained MDP or CMDP) \citep{Altman99constrainedmarkov}, which proceeds in discrete time steps. At each time step, the system occupies a \textit{state}, and the decision maker chooses an \textit{action} from the set of allowable actions. As a result of choosing the action, the decision maker receives a (possibly stochastic) vector of \textit{costs}, and the system then transitions to the next state according to a fixed \textit{state transition distribution}. In the reinforcement learning problem, the underlying state transition distributions and/or cost distributions are unknown and need to be learned from observations while aiming to minimize the total cost. This requires the algorithm to balance between exploration and exploitation, i.e., exploring different states and actions to learn the system dynamics vs.\ exploiting available information to minimize costs.

One way to balance exploration and exploitation, which is widely studied in reinforcement learning literature, is the \textit{optimism in the face of uncertainty} (OFU) principle \cite{LAI19854}. The OFU principle involves maintaining tight overestimates of expected outcomes and selecting actions with the highest optimistic estimate. This principle encourages exploration since poorly-learned states and actions will have higher estimates due to greater uncertainty. 

Another popular algorithm design principle is \textit{posterior sampling} \cite{THOMPSON_1933}. Posterior sampling maintains a posterior distribution for the unknown parameters from which it samples a plausible model. Then, posterior sampling solves for and executes the policy that is optimal under the sampled model.
Unlike the OFU principle, posterior sampling guides the exploration by the variance of the posterior distribution. Both these principles underpin many algorithms in reinforcement learning \cite{regal_2009, JMLR:v11:jaksch10a, Osband_PSRL2013, AY_bayesian_control_2015, NIPS2017_Shipra_OPSRL, TS_MDP_Ouyang_2017, lattimore_szepesvári_2020}. 

Posterior sampling has several advantages over OFU algorithms. First, unlike many OFU methods that simultaneously optimize across a set of plausible models (see, e.g., \cite{regal_2009, JMLR:v11:jaksch10a} for unconstrained RL and \cite{Efroni_2020_CMDP, Singh_CMDP_2020, chen_2022_optimisQlearn} for constrained RL), posterior sampling only requires solving for an optimal policy for a single sampled model, making it computationally more efficient in situations where sampling from a posterior is inexpensive (see \cite{Osband_PSRL2013, TS_MDP_Ouyang_2017, jafarniajahromi2021online} for unconstrained RL).  
Second, while OFU algorithms require explicit construction of confidence bounds based on observed data, which is a complicated statistical problem even for simple models, in posterior sampling, uncertainty is quantified in a statistically efficient way through the posterior distribution \cite{whyPSbetter}. This makes it easier to set up and implement, which is highly desirable.

However, posterior sampling faces a challenge in constrained problems and has been under-explored. Specifically, one key challenge of posterior sampling in constrained RL is to guarantee the feasibility of the problem with respect to the sampled model, i.e., to ensure the existence of a policy that satisfies the constraints with respect to the sampled model. A recent work \cite{kalagarla2023safe} makes a restricted assumption that every sampled CMDP is feasible and establishes Bayesian regret bounds in the episodic setting.

In this work, we study the posterior sampling principle in constrained reinforcement learning and address this challenge by providing novel results on the feasibility of the sampled CMDP. We focus on the most general \textit{infinite-horizon undiscounted average cost} criterion \cite{Altman99constrainedmarkov}. Under this setting, the learner-environment interaction never ends or resets, and the goal of achieving optimal long-term average performance under constraints, appears to be much more challenging compared to the episodic setting. In our setting, the underlying CMDP is assumed to be communicating with (unknown) finite bound on the hitting time $H$ and have finite states $S$ and finite actions $A$. By utilizing a mild assumption of the existence of a strictly feasible (unknown) solution to the original CMDP, we guarantee that the sampled CMDP becomes feasible after $O ( \sqrt{T} )$ steps. This allows for maintaining the advantages of posterior sampling in constrained problems. 

\textbf{Our main contribution} is a posterior sampling-based algorithm (\textsc{PSConRL}). Under a Bayesian framework, we show that the expected regret of the algorithm accumulated up to time $T$ is bounded by $\tilde{O} ( HS \sqrt{AT} )$ for each of the cost components, where $\tilde{O}$ hides logarithmic factors.   
Drawing inspiration from the algorithmic design structure of \cite{TS_MDP_Ouyang_2017}, the algorithm proceeds in episodes with two stopping criteria. At the beginning of every episode, it samples transition probability vectors from a posterior distribution for every state and action pair and executes the optimal policy of the sampled CMDP. To solve this planning problem, we utilize a linear program (LP) in the space of occupancy measures that incorporates constraints directly \cite{Altman99constrainedmarkov}. The optimal policy computed for the sampled CMDP is used throughout the episode. 

Thus, the main result of the paper shows that near-optimal Bayesian regret bounds are achievable in constrained RL under the infinite-horizon undiscounted setting. Our proofs combine the proof techniques of \cite{TS_MDP_Ouyang_2017} with that of \cite{NIPS2017_Shipra_OPSRL}. Additionally, simulation results demonstrate that our algorithm significantly outperforms existing approaches for three CMDP benchmarks.

The rest of the paper is organized as follows. 
Section \ref{sec:problem_formulation} is devoted to the methodological setup and contains the problem formulation. 
The \textsc{PSConRL} algorithm is introduced in Section \ref{sec:algorithm}.
Analysis of the algorithm is presented in Section \ref{sec:regret_bound}, which is followed by numerical experiments in Section \ref{sec:experiments}. 
Section \ref{sec:lit_review} briefly reviews the previous related work. 
Finally, we conclude with Section \ref{sec:conclusion}.

\section{Problem formulation}
\label{sec:problem_formulation}
\subsection{Constrained Markov Decision Processes}

A constrained MDP model is defined as a tuple $M = (\mathcal{S}, \mathcal{A}, p, \textit{\textbf{c}}, \tau)$ where $\mathcal{S}$ is the state space, $\mathcal{A}$ is the action space, $p : \mathcal{S} \times \mathcal{A} \xrightarrow{} \Delta^{\mathcal{S}}$ is the transition function, with $\Delta^{\mathcal{S}}$ indicating simplex over $\mathcal{S}$, $\textit{\textbf{c}} : \mathcal{S} \times \mathcal{A} \xrightarrow{} [0, 1]^{m+1}$ is the cost vector function, and $\tau \in [0,1]^m$ is a cost threshold. In general, CMDP is an MDP with multiple cost functions ($c_0, c_1,\dots,c_m$), one of which, $c_0$, is used to set the optimization objective, while the others, ($c_1,\dots,c_m$), are used to restrict what policies can do. A stationary policy $\pi$ is a mapping from state space $\mathcal{S}$ to a probability distribution on the action space $\mathcal{A}$, $\pi : \mathcal{S} \xrightarrow{} \Delta^{\mathcal{A}}$, which does not change over time.  Let $S = |\mathcal{S}|$ and $A = |\mathcal{A}|$ where $|\cdot|$ denotes the cardinality.

For transitions $p$ and scalar cost function $c$, a stationary policy $\pi$ induces a Markov chain and the expected infinite-horizon average cost (or \textit{loss}) for state $s \in \mathcal{S}$ is defined as
\begin{equation}
    J^{\pi}(s; c, p) = \overline{\lim}_{T \to \infty} \frac{1}{T} \sum_{t=1}^T \mathbb{E}_{p}^{\pi} \left [ c(s_t,a_t) \lvert s_0 = s \right ]
\end{equation}
where $\mathbb{E}_{p}^{\pi}$ is the expectation under the probability measure $\mathbb{P}_{p}^{\pi}$ over the set of infinitely long state-action trajectories. $\mathbb{P}_{p}^{\pi}$ is induced by policy $\pi$, transition function $p$, and the initial state $s$. Given some fixed initial state $s$  and  $\tau_1, \dots , \tau_m \in \mathbb{R}$ , the CMDP optimization problem is to find a policy  $\pi$  that minimizes $J^\pi(s; c_0,p)$ subject to the constraints  $J^\pi(s; c_i,p) \leq \tau_i, i = 1, \dots, m$:
\begin{equation}
    \min_\pi J^\pi(s; c_0, p) \quad \text{ s.t. } \quad  J^\pi(s; c_i, p) \leq \tau_i,\, i=1,\dots,m \,. 
    \label{eq:objective_cost}
\end{equation}

Next, we introduce hitting and cover times of Markov chains induced by stationary policies. Let $\pi$ be an arbitrary policy and $P_{\pi}$ be a transition matrix of a Markov chain $(S_0, S_1, \dots )$ induced by policy $\pi$. For $s, s' \in \mathcal{S}$, define the hitting time to be the first time at which the chain visits state $s'$ starting from $s$, $\tau_{ss'} = \min\{t \geq 0 : S_0=s, S_t=s'\}$, and $\Pi_{finite}$ to be a set of policies for which $\max_{s, s' \in \mathcal{S}}\mathbb{E}^{\pi}_{p}\tau_{ss'}$ is finite. Then, the worst-case hitting time between states in a chain is defined as follows 
\begin{equation*}
    t_{hit}(p) = \max_{s, s' \in \mathcal{S}} \max_{\pi \in \Pi_{finite}}\mathbb{E}^{\pi}_{p}\tau_{ss'}.
\end{equation*}
The cover time $\tau_{cov}$ is the minimal value such that, for every state $s' \in \mathcal{S}$, there exists $t \leq \tau_{cov}$ with $S_t = s'$. In other words, the cover time is the expected length of a random walk to cover every state at least once. 

To control the regret vector (defined below), we consider the subclass of communicating CMDPs. CMDP is communicating if for every pair of states $s$ and $s^{\prime}$ there exists a stationary policy (which might depend on $s, s^{\prime}$) under which $s^{\prime}$ is accessible from $s$. We note that in communicating CMDP neither hitting times nor cover times are guaranteed to be finite for a given policy $\pi$, however, by \cite[Proposition 8.3.1]{Puterman_mdp}, there exists a policy which induces a Markov chain with finite hitting and cover time. Therefore, $\Pi_{finite}$ is not empty for communicating CMDPs, and $t_{hit}$ is well defined.

We define $\Omega_*$ to be the set of all transitions $p$ such that the CMDP with transition probabilities $p$ is communicating, and there exists a number $H$ such that $t_{hit}(p) \leq H$. We will focus on CMDPs with transition probabilities in set $\Omega_*$.

Next, by \cite[Theorem 8.2.6]{Puterman_mdp}, for scalar cost function $c$, transitions $p$ that corresponds to communicating CMDP, and stationary policy $\pi$, there exists a bias function $v(s;c,p)$ satisfying the \textit{Bellman equation} for all $s \in \mathcal{S}$:
\begin{equation}
\label{eq:bellman}
    J^{\pi}(s; c, p) + v^{\pi}(s; c, p) = c(s,a) + \sum_{s' \in \mathcal{S}} p(s'|s,a) v^{\pi}(s'; c, p).
\end{equation}

If $v$ satisfies the Bellman equation, $v$ plus any constant also satisfies the Bellman equation. Furthermore, the loss of the optimal stationary policy $\pi_{\ast}$ does not depend on the initial state, i.e., $J^{\pi_{\ast}}(s; c, p) = J^{\pi_{\ast}}(c, p)$, as presented in \cite[Theorem 8.3.2]{Puterman_mdp}. Without loss of generality, let $\min_{s \in \mathcal{S}} v^{\pi_*}(s; c_i, p) = 0$, for $i=1, \dots m$, and define the span of the MDP as $sp(p) = \max_{1 \leq i \leq m}\max_{s \in \mathcal{S}}v^{\pi_*}(s; c_i, p)$. Note, if $t_{hit}(p) \leq H$, then $sp(p) \leq H$ as well \citep{regal_2009}.

Given the above definitions and results, we can now define the reinforcement learning problem studied in this paper.

\subsection{The reinforcement learning problem}

We study the reinforcement learning problem where an agent interacts with a communicating CMDP $M = (\mathcal{S}, \mathcal{A}, p_{\ast}, \textit{\textbf{c}}, \tau)$. We assume that the agent has complete knowledge of $\mathcal{S}, \mathcal{A}$, and the cost function $\textit{\textbf{c}}$, but not the transitions $p_{\ast}$ or the hitting time bound $H$. This assumption is common for RL literature \citep{regal_2009, NIPS2017_Shipra_OPSRL, whyPSbetter, kalagarla2023safe} and is without loss of generality because the complexity of learning the cost and reward functions is dominated by the complexity of learning the transition probability.

We focus on a Bayesian framework for the unknown parameter $p_{\ast}$. That is, at the beginning of the interaction, the actual transition probabilities $p_{\ast}$ are randomly generated from the prior distribution $f_1$. The agent can use past observations to learn the underlying CMDP model and decide future actions. The goal is to minimize the total cost $\sum_{t=1}^T c_0(s_t, a_t)$ while violating constraints as little as possible, or equivalently, minimize the total regret for the main cost component and auxiliary cost components over a time horizon $T$, defined as
\begin{align*}
    BR_+ (T;c_0) & = \mathbb{E} \left [  \sum_{t=1}^T \big (c_0(s_t, a_t) - J^{\pi_{\ast}}(c_0; p_{\ast}) \big )_+\right ], \\
    BR_+ (T;c_i) & = \mathbb{E} \left [  \sum_{t=1}^T \big (  c_i(s_t, a_t) - \tau_i \big )_+ \right ], \quad \quad i=1,\dots,m,
\end{align*}
where $s_t, a_t, t = 1, \dots , T$ are generated by the agent, $J^{\pi_{\ast}}(c_0; p_{\ast})$ is the optimal loss of the CMDP $M$, and $[x]_+ := \max \{0,x\}$. The above expectation is with respect to the prior distribution $f_1$, the randomness in the state transitions, and the randomized policy.

\subsection{Assumptions}
We introduce two mild assumptions that are common in reinforcement learning literature.

\begin{assumption}
The support of the prior distribution $f_1$ is a subset of $\Omega_*$. That is, the CMDP $M$ is communicating and $t_{hit}(p_{\ast}) \leq H$.
\label{assum:WASP}
\end{assumption}

This type of assumption is common for the Bayesian framework (see, e.g., \cite{TS_MDP_Ouyang_2017, Agarwal_2021_PSRL}) and is not overly restrictive \citep{regal_2009, chen_2022_optimisQlearn}. We also provide a practical justification for this assumption in the experiments section by showing that it can be supported by choosing Dirichlet distribution as a prior.

\begin{assumption}
\label{assum:slater}
    There exists $\gamma > 0$ and unknown policy $\bar{\pi}(\cdot | s) \in \Delta^{\mathcal{A}}$ such that $J^{\bar{\pi}}(c_i, p_{\ast}) \leq \tau_i - \gamma$ for all $i \in \{1, \dots, m \}$, and without loss of generality, we assume under such policy $\bar{\pi}$, the Markov chain resulting from the CMDP is irreducible and aperiodic.
\end{assumption}

The first part of the assumption is standard in constrained reinforcement learning (see, e.g., \cite{Efroni_2020_CMDP, DingWYWJ21}) and is mild as we do not require the knowledge of such policy. The second part is without loss of generality due to \cite[Proposition 8.3.1]{Puterman_mdp} and \cite[Proposition 8.5.8]{Puterman_mdp}. By imposing this assumption, we can control the sensitivity of problem in Eq. \eqref{eq:objective_cost} to the deviation between the true and sampled transitions. Later, we will use this assumption to guarantee that the constrained problem in Eq. \eqref{eq:objective_cost} becomes feasible under the sampled transitions once the number of visitations to every
state-action pair is sufficient.

\section{Posterior sampling algorithm}
\label{sec:algorithm}
In this section, we propose the Posterior Sampling for Constrained Reinforcement Learning (\textsc{PSConRL}) algorithm. The algorithm proceeds in episodes and uses \textit{Linear Programming} for solving CMDP and \textit{Bayes rule} for the posterior update.

\paragraph{Linear programming in CMDP.} When CMDP is known, an optimal policy for \eqref{eq:objective_cost} can be obtained by solving the following linear program \citep{Altman99constrainedmarkov}:

\begin{align}
    \min_{\mu} \sum_{s,a} \mu(s,a) c_0(s,a), \label{eq1}\\
    \mathrm{s.t.}\quad \sum_{s,a} \mu(s,a) c_i(s,a) \leq \tau_i,\, \quad i=1,\dots,m, \\
    \sum_a \mu(s,a) = \sum_{s', a} \mu(s', a) p(s',a,s), \quad \forall s \in \mathcal{S}, \\
    \mu(s,a) \geq 0, \quad \forall (s,a) \in \mathcal{S} \times \mathcal{A}, \quad \sum_{s,a} \mu(s,a) = 1, \label{eq4}
\end{align}
where the decision variable $\mu(s,a)$ is occupancy measure (fraction of visits to $(s,a)$). Given the optimal solution for LP \eqref{eq1}-\eqref{eq4}, $\mu_*(s,a)$, one can construct the optimal stationary policy $\pi_{\ast}(a|s)$ for \eqref{eq:objective_cost} by choosing action $a$ in state $s$ with probability $\frac{\mu_*(s,a)}{\sum_{a'} \mu_*(s,a')}$.

\paragraph{Bayes rule.} At each time step $t$, given the history $h_t$, the agent can compute the posterior distribution $f_t$ given by $f_t(\mathcal{P}) = \prob(p_{\ast} \in \mathcal{P} \lvert h_t)$ for any set $\mathcal{P}$. Upon applying the action $a_t$ and observing the new state $s_{t+1}$, the posterior distribution at $t + 1$ can be updated according to Bayes’ rule as

\begin{equation}
\label{eq:bayes_rule}
    f_{t+1}(dp) = \frac{p(s_{t+1} | s_t, a_t) f_t(dp)}{\int p^{\prime}(s_{t+1}|s_t, a_t) f_t(dp^{\prime})}.
\end{equation}

\paragraph{Algorithm description.} At the beginning of episode $k$, a parameter $p_k$ is sampled from the posterior distribution $f_{t_k}$, where $t_k$ is the start of the $k$-th episode. During each episode $k$, actions are generated from the optimal stationary policy $\pi_k$ for the sampled parameter $p_k$, which is observed by solving LP \eqref{eq1}-\eqref{eq4}. The key challenge of the posterior sampling algorithm is that neither problem in Eq. \eqref{eq:objective_cost} nor LP \eqref{eq1}-\eqref{eq4} are guaranteed to be feasible under the sampled transitions $p_k(\cdot|s,a)$. To account for this issue, the algorithm performs an additional check to verify whether the LP (\ref{eq1})-(\ref{eq4}) is feasible under $p_k(\cdot|s,a)$ (line 7), otherwise it recovers the uniformly random policy $\pi_0$.\footnote{Note that computationally this step is polynomial in the number of constraints (i.e., $poly(SA)$).} 
Using Assumption \ref{assum:slater}, we will show that eventually, after $O ( \sqrt{T} )$ steps, the sampled CMDP becomes feasible, and the algorithm will effectively compute $\pi_k$ by solving LP \eqref{eq1}-\eqref{eq4}.

Let $N_t(s,a)$ denote the number of visits to $(s,a)$ before time $t$ and $T_k = t_{k+1} - t_k$ be the length of the episode. We use two stopping criteria of \cite{TS_MDP_Ouyang_2017} for episode construction. The rounds $t= 1,...,T$ are broken into consecutive episodes as follows: the $k$-th episode begins at the round $t_k$ immediately after the end of $(k - 1)$-th episode and ends at the first round $t$ such that (i) $N_{t}(s,a) \geq 2 N_{t_k}(s,a)$ or (ii) $t \leq t_k + T_{k-1}$ for some state-action pair $(s,a)$. The first criterion is the doubling trick of \cite{JMLR:v11:jaksch10a} and ensures the algorithm has visited some state-action pair $(s,a)$ at least the same number of times it had visited this pair $(s,a)$ before episode $k$ started. The second criterion controls the growth rate of episode length and is believed to be necessary under the Bayesian setting \citep{TS_MDP_Ouyang_2017}.

The algorithm is summarized in Algorithm \ref{alg1:psrl_transitions}.

\begin{algorithm}[tb]
   \caption{Posterior Sampling for Constrained Reinforcement Learning (\textsc{PSConRL})}
   \label{alg1:psrl_transitions}
\begin{algorithmic}[1]
    \STATE {\bfseries Input:} $f_1$

    \STATE Initialization: $t \gets 1$, $t_k \gets 0$, $\pi_0(\cdot) \gets \frac{1}{\lvert \mathcal{A} \rvert}$
    \FOR{episodes $k = 1, 2, \dots$}
        \STATE $T_{k-1} \gets t - t_k$ 
        \STATE $t_k \gets t$
        \STATE Generate $p_k(\cdot|s,a) \sim f_{t_k}$
        \IF{ LP (\ref{eq1})-(\ref{eq4}) is feasible under $p_k(\cdot|s,a)$}
            \STATE Compute $\pi_k(\cdot)$ by solving LP (\ref{eq1})-(\ref{eq4})
        \ELSE
            \STATE $\pi_k(\cdot) \gets \pi_{0}(\cdot)$
        \ENDIF
        \REPEAT
            \STATE Apply action $a_t = \pi_k(s_t)$
            \STATE Observe new state $s_{t+1}$
            \STATE Update counter $N_t(s_t,a_t)$
            \STATE Update $f_{t+1}$ according to \eqref{eq:bayes_rule}
            \STATE $t \gets t + 1$
        \UNTIL{ $t \leq t_k + T_{k-1}$ and $N_t(s,a) \leq 2 N_{t_k}(s,a)$} for some $(s,a) \in \mathcal{S} \times \mathcal{A}$
    \ENDFOR

\end{algorithmic}
\end{algorithm}

\paragraph{Main result.} We now provide our main results for the \textsc{PSConRL} algorithm for learning in CMDPs.

\begin{theorem}
\label{thm:regret_bound}
     For any communicating CMDP $M$ with $S$ states and $A$ actions under Assumptions \ref{assum:WASP} and \ref{assum:slater}, the Bayesian regret for main and auxiliary cost components of Algorithm \ref{alg1:psrl_transitions} are bounded as:
     
    \begin{align*}
        & BR_+ (T;c_i) \leq O \left ( H S \sqrt{AT \log (AT)} + H \sqrt{T} \log (S) \right ), \quad \text{for } i = 0, \dots, m. \\
    \end{align*}
    Here $O(\cdot)$ notation hides only the absolute constant.
\end{theorem}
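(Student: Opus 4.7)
The plan is to adapt the Bayesian regret analysis of \cite{TS_MDP_Ouyang_2017} for undiscounted average-cost PSRL to the constrained setting, combining it with the refined transition-concentration arguments of \cite{NIPS2017_Shipra_OPSRL} and a new feasibility analysis for the LP (\ref{eq1})-(\ref{eq4}). I would fix a cost index $i \in \{0,1,\dots,m\}$ and bound $BR_+(T; c_i)$. For $i \geq 1$, on any feasible episode the sampled policy $\pi_k$ satisfies $J^{\pi_k}(c_i, p_k) \leq \tau_i$ by LP feasibility, so $c_i(s_t,a_t) - \tau_i \leq c_i(s_t,a_t) - J^{\pi_k}(c_i, p_k)$; for $i = 0$ the posterior-sampling identity $\mathbb{E}[T_k J^{\pi_k}(c_0, p_k)] = \mathbb{E}[T_k J^{\pi_\ast}(c_0, p_\ast)]$ holds because the stopping rule $t \leq t_k + T_{k-1}$ renders $T_k$ measurable with respect to $h_{t_k}$. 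Either way the problem reduces to controlling $\mathbb{E}\sum_k \sum_{t=t_k}^{t_{k+1}-1}(c_i(s_t, a_t) - J^{\pi_k}(c_i, p_k))$ on feasible episodes, plus the cost accumulated on infeasible episodes where $\pi_0$ is executed.

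On a feasible episode, the Bellman equation \eqref{eq:bellman} for $(\pi_k, c_i, p_k)$ decomposes the per-step gap into three familiar pieces: a boundary term $v^{\pi_k}(s_{t_k}; c_i, p_k) - v^{\pi_k}(s_{t_{k+1}}; c_i, p_k)$ of magnitude at most $sp(p_k) \leq H$; a one-step martingale difference $\sum_t \bigl(\sum_{s'} p_k(s'|s_t,a_t)v^{\pi_k}(s') - v^{\pi_k}(s_{t+1})\bigr)$ controlled at scale $H\sqrt{T\log T}$ by Azuma-Hoeffding; and a model-deviation term $\sum_t \langle p_k(\cdot|s_t,a_t) - p_\ast(\cdot|s_t,a_t), v^{\pi_k}\rangle$. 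Summing the boundary piece across episodes contributes $H\cdot K_T$, and the doubling criterion of \cite{JMLR:v11:jaksch10a} yields $K_T = O(\sqrt{SAT\log T})$. Following \cite{NIPS2017_Shipra_OPSRL}, the model-deviation term is handled with a Weissman-type $\ell_1$ concentration bound together with a refined inner-product analysis that splits the transition vector into heavy and light coordinates; combined with the pigeonhole identity $\sum_t N_{t_k}(s_t,a_t)^{-1/2} = O(\sqrt{SAT})$ this produces both the dominant $\tilde{O}(HS\sqrt{AT})$ contribution and the lower-order $O(H\sqrt{T}\log S)$ term.

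The main obstacle, and the genuinely new piece beyond \cite{TS_MDP_Ouyang_2017}, is showing that infeasible episodes contribute only $\tilde{O}(\sqrt{T})$ steps. Using Assumption \ref{assum:slater}, I would argue that once every state-action pair has been visited on the order of $H^2/\gamma^2$ times, the posterior $f_{t_k}$ concentrates tightly enough around $p_\ast$ that the unknown strictly feasible policy $\bar{\pi}$ still satisfies all $m$ constraints under the sampled $p_k$, hence the LP (\ref{eq1})-(\ref{eq4}) is guaranteed to be feasible. This requires a perturbation bound converting a small $\ell_1$ deviation in transitions into a small change in each $J^{\bar{\pi}}(c_i, \cdot)$, leveraging the irreducibility and aperiodicity of $\bar{\pi}$ under $p_\ast$ to ensure a bounded mixing time of the underlying chain. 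Combined with a cover-time argument for the uniform policy $\pi_0$ in the communicating CMDP guaranteed by Assumption \ref{assum:WASP}, the total infeasible-episode contribution is $\tilde{O}(H\sqrt{T})$. Summing the feasible-episode and infeasible-episode contributions then yields the stated bound for every cost component.
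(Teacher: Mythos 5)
Your proposal is correct and follows essentially the same route as the paper: the same split into feasible and infeasible episodes, the same Slater-condition-plus-stationary-distribution perturbation argument to show the sampled LP becomes feasible after roughly $\tilde{O}(H^2S/\gamma^2)$ visits per state--action pair, the same cover-time argument for the uniform policy $\pi_0$ on infeasible episodes, and the same Ouyang-style Bellman decomposition (boundary, martingale, and model-deviation terms) on feasible episodes. The only cosmetic discrepancy is bookkeeping: in the paper the $O(H\sqrt{T}\log S)$ term comes from the harmonic-number bound $t^{\pi_0}_{cov} \leq t^{\pi_0}_{hit}(1+\tfrac{1}{2}+\cdots+\tfrac{1}{S-1})$ on the cover time during infeasible episodes, not from the model-deviation analysis on feasible ones.
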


\begin{remark}
    The \textsc{PSConRL} algorithm only requires the knowledge of $\mathcal{S}$, $\mathcal{A}$, $\textbf{\textit{c}}$, and the prior distribution $f_1$. It does not require the knowledge of the horizon $T$ or hitting time bound $H$ as in \cite{chen_2022_optimisQlearn}.
\end{remark}


\section{Theoretical analysis}
\label{sec:regret_bound}

In this section, we prove that the regret of Algorithm \ref{alg1:psrl_transitions} is bounded by $\tilde{O} \left ( HS \sqrt{AT} \right )$. 

A key property of posterior sampling is that conditioned on the information at time $t$, the transition functions $p_{\ast}$ and $p_t$ have the same distribution if $p_t$ is sampled from the posterior distribution at time $t$ \citep{Osband_PSRL2013}. Since the \textsc{PSConRL} algorithm samples $p_k$ at the stopping time $t_k$, we use the stopping time version of the posterior sampling property stated as follows.

\begin{lemma}[Adapted from Lemma 1 of \cite{jafarniajahromi2021online}]
Let $t_k$ be a stopping time with respect to the filtration $\left ( \mathcal{F}_t \right )_{t=1}^{\infty}$, and $p_k$ be the sample drawn from the posterior distribution at time $t_k$. Then, for any measurable function $g$ and any $\mathcal{F}_{t_k}$-measurable random variable $X$, we have 
\label{lm:posterior_lemma}
\begin{align*}
    \ee \left [ g(p_k, X) \right ] = \ee \left [ g(p_{\ast}, X) \right ].
\end{align*}
\end{lemma}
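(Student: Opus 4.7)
The plan is to bootstrap from the fixed-time posterior sampling property. For any deterministic time $t$, if $p_t$ is drawn from the posterior $f_t$, then conditional on $\mathcal{F}_t$ the random variables $p_t$ and $p_{\ast}$ have the same distribution; consequently, for every $\mathcal{F}_t$-measurable $X$ and measurable $g$,
\begin{equation*}
\mathbb{E}\!\left[g(p_t, X)\,\middle|\,\mathcal{F}_t\right] = \mathbb{E}\!\left[g(p_{\ast}, X)\,\middle|\,\mathcal{F}_t\right].
\end{equation*}
This is the standard (non-stopping) posterior sampling identity, and it is the fact I would treat as the starting ingredient.

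Next I would lift this to the stopping time $t_k$ by decomposing over its value. Since $t_k$ is $(\mathcal{F}_t)$-stopping, the event $\{t_k = t\}$ lies in $\mathcal{F}_t$ for every $t$, and by assumption $X$ is $\mathcal{F}_{t_k}$-measurable, so $X\mathbf{1}_{\{t_k=t\}}$ is $\mathcal{F}_t$-measurable. Writing
\begin{equation*}
\mathbb{E}\!\left[g(p_k, X)\right] = \sum_{t=1}^{\infty} \mathbb{E}\!\left[g(p_t, X)\,\mathbf{1}_{\{t_k=t\}}\right],
\end{equation*}
I would apply the tower property, pull the $\mathcal{F}_t$-measurable indicator out of the inner conditional expectation, substitute the fixed-time identity to replace $p_t$ by $p_{\ast}$, and then reassemble the sum to obtain $\mathbb{E}[g(p_{\ast}, X)]$.

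For the bookkeeping it helps to observe that on $\{t_k = t\}$ the sampled parameter $p_k$ agrees with $p_t$, which is what justifies writing $g(p_k, X)\mathbf{1}_{\{t_k=t\}} = g(p_t, X)\mathbf{1}_{\{t_k=t\}}$ inside each summand; after that the argument is a clean Fubini/tower manipulation.

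The only real subtlety, and the one I would be most careful about, is measurability at the stopping time: we need $X$ to be in $\mathcal{F}_{t_k}$ (not merely $\mathcal{F}_{t_k+1}$), and we need $p_k$ to be $\mathcal{F}_{t_k}$-measurable given the external randomization used to draw from $f_{t_k}$. A mild technical assumption (e.g.\ $t_k$ is almost surely finite, or one truncates at $t_k \wedge N$ and sends $N \to \infty$ by dominated convergence when $g$ is bounded) is enough to make the interchange of sum and expectation rigorous; this dominated-convergence/truncation step is where I would spend the most care, since everything else is essentially an application of the optional sampling idea to the already-established fixed-time posterior identity.
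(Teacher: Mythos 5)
Your argument is correct and is essentially the standard proof of this lemma: the paper itself does not reprove it but imports it from Lemma~1 of \cite{jafarniajahromi2021online}, whose proof is exactly your decomposition over $\{t_k = t\}$ combined with the fixed-time posterior identity and the tower property. Your attention to the interchange of sum and expectation (bounded $g$ or truncation plus dominated convergence) is the right place to be careful, and it is harmless in the paper's application since $g$ is an indicator there.
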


Further, we introduce two lemmas that constitute the primary novel elements in the proof of Theorem \ref{thm:regret_bound}. Recall that in every episode $k$, Algorithm \ref{alg1:psrl_transitions} runs either an optimal loss policy by solving LP \eqref{eq1}-\eqref{eq4} for the sampled transitions or utilizes random policy $\pi_0$. In Lemma \ref{lm:feasibility}, we prove that problem in Eq. \eqref{eq:objective_cost} becomes feasible under the sampled transitions once the number of visitations to every state-action pair is sufficient, i.e., there exists a policy that satisfies constraints in \eqref{eq:objective_cost} and Algorithm \ref{alg1:psrl_transitions} will effectively find an optimal solution.

\begin{lemma}[Feasibility lemma]
\label{lm:feasibility}
    If $N_{t_k}(s,a) \geq \sqrt{T}$, $\norm{p_k(\cdot|s,a) - p_{\ast}(\cdot|s,a)}_1 \leq \sqrt{\frac{14S \log (2A T t_k)}{\max \{1, N_{t_k}(s,a)\}}}$ for all $(s,a)$, and $\gamma \geq H \sqrt{\frac{14S \log (2A T^2)}{\sqrt{T}}}$ there exists policy $\pi$, which satisfies $J^\pi(c_i; p_k) \leq \tau_i$ for all $i \in \{1, \dots, m\}$. 
\end{lemma}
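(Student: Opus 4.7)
The plan is to use the Slater-type policy $\bar{\pi}$ furnished by Assumption \ref{assum:slater} as the explicit feasible witness under $p_k$. Concretely, I will show that under the hypotheses of the lemma, $\bar{\pi}$ satisfies $J^{\bar{\pi}}(c_i; p_k) \leq \tau_i$ for every $i$, which immediately establishes the existence claim. The heart of the argument is a perturbation bound that controls how much the average cost of a fixed policy can change when the transition kernel is moved from $p_*$ to a nearby $p_k$.

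To derive this perturbation bound, I would invoke the Bellman equation \eqref{eq:bellman} for $\bar{\pi}$ under $(c_i, p_*)$ to rewrite $c_i(s,a) = J^{\bar{\pi}}(c_i;p_*) + v(s) - \sum_{s'} p_*(s'|s,a) v(s')$, where $v := v^{\bar{\pi}}(\cdot;c_i,p_*)$ is the bias. Plugging this into the occupancy-measure expression $J^{\bar{\pi}}(c_i;p_k) = \sum_{s,a} \mu_k(s,a)\, c_i(s,a)$ (with $\mu_k$ the stationary occupancy of $\bar{\pi}$ under $p_k$), the $v(s)$ term collapses by the stationarity identity $\sum_a \mu_k(s,a) = \sum_{s',a'} \mu_k(s',a') p_k(s|s',a')$, yielding
\begin{equation*}
    J^{\bar{\pi}}(c_i;p_k) - J^{\bar{\pi}}(c_i;p_*) \;=\; \sum_{s,a} \mu_k(s,a)\,\bigl(p_k(\cdot|s,a) - p_*(\cdot|s,a)\bigr)^{\!\top} v.
\end{equation*}
Holder's inequality gives $|J^{\bar{\pi}}(c_i;p_k) - J^{\bar{\pi}}(c_i;p_*)| \leq sp(v)\cdot \max_{s,a}\|p_k(\cdot|s,a)-p_*(\cdot|s,a)\|_1$. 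Since $\bar{\pi}\in\Pi_{finite}$ by Assumption \ref{assum:slater} and $t_{hit}(p_*) \leq H$ by Assumption \ref{assum:WASP}, the span $sp(v)$ is bounded by $H$ via the standard hitting-time argument.

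Combining these ingredients, under the hypotheses $N_{t_k}(s,a) \geq \sqrt{T}$ and the given $L_1$ bound, the maximum $L_1$ deviation is at most $\sqrt{14S\log(2AT t_k)/\sqrt{T}} \leq \sqrt{14S\log(2AT^2)/\sqrt{T}}$ because $t_k \leq T$. Multiplying by $H$ and using the assumed lower bound $\gamma \geq H\sqrt{14S\log(2AT^2)/\sqrt{T}}$ gives $|J^{\bar{\pi}}(c_i;p_k) - J^{\bar{\pi}}(c_i;p_*)| \leq \gamma$. Hence $J^{\bar{\pi}}(c_i;p_k) \leq J^{\bar{\pi}}(c_i;p_*) + \gamma \leq (\tau_i - \gamma) + \gamma = \tau_i$, so $\bar{\pi}$ is feasible under $p_k$.

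The main delicacy I anticipate is justifying that $J^{\bar{\pi}}(\cdot;p_k)$ is well-defined via a stationary occupancy $\mu_k$ in the first place: Assumption \ref{assum:slater} supplies irreducibility and aperiodicity only under $p_*$, so one has to argue that sufficiently small $L_1$ perturbations to $p_*$ preserve the hitting structure of $\bar{\pi}$ (equivalently, that the support of $p_k(\cdot|s,a)$ contains the support of $p_*(\cdot|s,a)$ on the relevant transitions, or, more robustly, to work with the $\limsup$ definition of the average cost and the Cesàro-averaged transition matrix so that the Bellman-equation manipulation above remains valid regardless of periodicity). Once that technicality is handled, the rest of the argument is a short chain of inequalities assembling the three numerical hypotheses of the lemma.
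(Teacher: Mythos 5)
Your proposal is correct and follows essentially the same route as the paper's proof: you use the Slater policy $\bar{\pi}$ as the feasible witness, apply the Bellman equation under $(c_i, p_*)$, and telescope against the stationary distribution of $\bar{\pi}$ under $p_k$ to obtain $J^{\bar{\pi}}(c_i;p_k) - J^{\bar{\pi}}(c_i;p_*) \leq H\,\max_{s,a}\norm{p_k(\cdot|s,a)-p_*(\cdot|s,a)}_1 \leq \gamma$, which is exactly the paper's limiting-matrix computation $(P_{\bar{\pi}}^k)^{\ast}(P_{\bar{\pi}}^k - P^{\ast}_{\bar{\pi}})v^{\bar{\pi},p_*}$ written in occupancy-measure notation. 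The delicacy you flag about whether irreducibility and aperiodicity transfer from $p_*$ to $p_k$ is real and is handled in the paper only by assertion, so your suggestion to fall back on the Cesàro-limit definition is a reasonable (and arguably more careful) way to close that step.
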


\begin{remark}
    A similar statement was introduced in \cite{Agarwal_2021_PSRL}. However, we emphasize that their analysis was conducted for ergodic CMDPs and does not generalize to our setting.
\end{remark}

\noindent \textbf{Proof Sketch}
    Fix some $i \in \{1, \dots, m\}$. By Assumption \ref{assum:slater}, policy $\bar{\pi}$ is strictly feasible under the true transitions $p_{\ast}$. This is a good candidate to provide a feasible solution with respect to sampled transitions $p_k$ that lie close enough to the true transitions $p_{\ast}$. 
    To prove this, we use the fact that $\norm{p_k(\cdot|s,a) - p_{\ast}(\cdot|s,a)}_1 \leq \sqrt{\frac{14S \log (2A T t_k)}{\max \{1, N_{t_k}(s,a)\}}}$ and the relation between the loss, the bias vector and the cost vector of policy $\bar{\pi}$, discussed in Section \ref{sec:problem_formulation}, and show that the loss of $\bar{\pi}$ w.r.t. $p_k$, $J^{\bar{\pi}}(c_i, p_k)$, lies within distance $\gamma$ from the loss of $\bar{\pi}$ w.r.t. $p_{\ast}$, $J^{\bar{\pi}}(c_i, p_{\ast})$:
    \begin{equation}
        J^{\bar{\pi}}(c_i, p_k) - J^{\bar{\pi}}(c_i, p_{\ast}) \leq \gamma.
        \label{eq:feasible_gain}
    \end{equation}

    Specifically, using the properties of the stationary distribution of an aperiodic and irreducible Markov chain \citep{Puterman_mdp},  we show that
    \begin{equation*}
        J^{\bar{\pi}}(c_i, p_k) - J^{\bar{\pi}}(c_i, p_{\ast}) \leq H \norm{p_k(\cdot|s,a) - p_{\ast}(\cdot|s,a)}_1 \leq \gamma 
    \end{equation*}

    \noindent Rearranging the terms in \eqref{eq:feasible_gain} and using Assumption \ref{assum:slater} gives us the desired result:
    \begin{equation*}
        J^{\bar{\pi}}(c_i, p_k) \leq J^{\bar{\pi}}(c_i, p_{\ast}) + \gamma \leq \tau_i - \gamma + \gamma \leq \tau_i. \fqed
    \end{equation*}

Subsequently, in Lemma \ref{lm:cover_time_up_bound}, we show that the exploration policy $\pi_0$ induces a Markov chain with finite hitting and cover times. This result will play a crucial role in determining an upper bound for when the conditions of Lemma \ref{lm:feasibility} will be met.
\begin{lemma}
\label{lm:cover_time_up_bound}
    For any communicating CMDP $M$, the hitting and cover times of the random uniform policy $\pi_0$ are finite, and $\pi_0 \in \Pi_{finite}$.
\end{lemma}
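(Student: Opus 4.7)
\textbf{Proof plan for Lemma \ref{lm:cover_time_up_bound}.}

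The plan is to show that the Markov chain induced by $\pi_0$ on the finite state space $\mathcal{S}$ is irreducible, and then invoke standard results for finite irreducible chains to conclude that all pairwise expected hitting times — and hence the cover time — are finite. Since $\pi_0(a|s)=1/A>0$ for every state-action pair, the chain inherits reachability from any deterministic policy.

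First, I would fix an arbitrary ordered pair $(s,s')\in\mathcal{S}\times\mathcal{S}$ and exploit the communicating assumption. By the definition of communicating CMDPs, there exists a stationary (possibly deterministic) policy $\pi_{s,s'}$ and an integer $t\geq 1$ together with a path $s=s_0,a_0,s_1,a_1,\ldots,a_{t-1},s_t=s'$ satisfying $\pi_{s,s'}(a_i|s_i)>0$ and $p_{\ast}(s_{i+1}|s_i,a_i)>0$ for each $i<t$. Under the uniform random policy this same trajectory is still realized with probability at least $A^{-t}\prod_{i=0}^{t-1} p_{\ast}(s_{i+1}|s_i,a_i)>0$, so $\mathbb{P}_{p_{\ast}}^{\pi_0}(S_t=s'\mid S_0=s)>0$. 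Thus every state is accessible from every other state, which gives irreducibility of the chain induced by $\pi_0$.

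Next, since $\mathcal{S}$ is finite, an irreducible chain is automatically positive recurrent, so by standard Markov-chain theory (e.g., \cite[Proposition 8.3.1]{Puterman_mdp}) the expected hitting time $\mathbb{E}_{p_{\ast}}^{\pi_0}\tau_{ss'}$ is finite for every pair $(s,s')$. Taking the maximum over the finite set $\mathcal{S}\times\mathcal{S}$ preserves finiteness, which shows $\max_{s,s'}\mathbb{E}_{p_{\ast}}^{\pi_0}\tau_{ss'}<\infty$ and hence $\pi_0\in\Pi_{finite}$. For the cover time, I would combine the finite maximum hitting time with Matthews' bound, which yields $\mathbb{E}_{p_{\ast}}^{\pi_0}\tau_{cov}\leq (\max_{s,s'}\mathbb{E}_{p_{\ast}}^{\pi_0}\tau_{ss'})\sum_{k=1}^{S}\tfrac{1}{k}$; alternatively, one can obtain the same conclusion without Matthews' inequality by visiting the remaining states one at a time and applying the strong Markov property together with the finite-hitting-time bound $S$ times.

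The only subtle step is the first one: rigorously lifting accessibility from an arbitrary stationary policy $\pi_{s,s'}$ (which may be deterministic or randomize in a peculiar way) to the uniform policy $\pi_0$. This reduces to the observation that accessibility under any stationary policy is witnessed by a finite trajectory of state-action pairs each carrying strictly positive probability, and the bound $\pi_0(\cdot|s)\geq 1/A$ allows us to replay exactly that trajectory under $\pi_0$. Everything after this step is routine finite-state Markov chain theory.
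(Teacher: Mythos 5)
Your proposal is correct and follows essentially the same route as the paper: both arguments rest on the observation that $\pi_0$ assigns positive probability to every action, so the chain it induces inherits accessibility from whatever policies witness the communicating property, giving irreducibility on a finite state space and hence finite expected hitting times, with the cover time then controlled via Matthews' bound. The only cosmetic difference is that the paper compares the support of $P_{\pi_0}$ against that of a single ergodic policy $\tilde{\pi}$ guaranteed by \cite[Proposition 8.3.1]{Puterman_mdp} and argues by contradiction on powers of the transition matrices, whereas you replay, for each pair $(s,s')$, a positive-probability trajectory of a pair-dependent policy under $\pi_0$ — both establish the same irreducibility fact.
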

The proofs of Lemmas \ref{lm:feasibility} and \ref{lm:cover_time_up_bound} are presented in appendices \ref{apx:proof_feasibility} and \ref{apx:upper_bound_cover_time} correspondingly.

\subsection{Proof of Theorem \ref{thm:regret_bound}}

    \paragraph{Bounding regret of the main cost component.}
    To analyze the performance of \textsc{PSConRL} over $T$ time steps, define $K_T = \arg \max \{ k : t_k \leq T \}$, number of episodes of \textsc{PSConRL} until time $T$. By \cite[Lemma 1]{TS_MDP_Ouyang_2017}, $K_T$ is upper-bounded by $\sqrt{2SAT \log(T)}$. Next, decompose the total regret into the sum of episodic regrets conditioned on the good event that the sampled CMDP is feasible. Define the set $\mathcal{G} = \left \{ p \in \Omega_*: \exists \pi \text{ s.t. } J^\pi(c_i; p) \leq \tau_i, \forall i \in \{1, \dots, m\} \right \}$. Then, using the tower rule, we decompose the regret as 
    \begin{align}
    \label{eq:regret_decompose}
        BR_+ (T;c_0) & 
        = \ee \left [ \sum_{t=1}^T \left ( c_0(s_t, a_t) - J^{\pi_*}(c_0; p_{\ast}) \right )_+ \right ] 
        = \sum_{k=1}^{K_T} \ee \left [ R_{0,k} \right ] 
        \notag
        \\
        & 
        = \sum_{k=1}^{K_T} \ee \left [ R_{0,k} \lvert p_k \notin \mathcal{G} \right ] \prob \left ( p_k \notin \mathcal{G} \right )
        + \sum_{k=1}^{K_T} \ee \left [ R_{0,k} \lvert p_k \in \mathcal{G} \right ] \prob \left ( p_k \in \mathcal{G} \right ) 
        \\
        \notag
    \end{align}
    where $R_{0,k} = \sum_{t=t_k}^{t_{k+1}-1} \left [ c_0(s_t, a_t) - (t_{k+1} - t_k)  J^{\pi_{\ast}}(c_0; p_{\ast}) \right ]_+$ and $J^{\pi_*}(c_0; p_{\ast})$ is the optimal loss of CMDP $M$.

    
    Define two events $A_1 = \{p_k \notin \mathcal{G} \wedge  N_{t_k}(s,a) \geq \sqrt{T}, \forall s, a \}$ and $A_2 = \{p_k \notin \mathcal{G} \wedge \exists (s,a):  N_{t_k}(s,a) < \sqrt{T}\}$. Then, the first term of \eqref{eq:regret_decompose} can be further decomposed as
    \begin{align*}
        \sum_{k=1}^{K_T} \ee \left [ R_{0,k} \lvert p_k \notin \mathcal{G} \right ] \prob \left ( p_k \notin \mathcal{G} \right ) 
        & = \sum_{k=1}^{K_T} \ee \left [ R_{0,k} \lvert A_1 \right ] \prob \left ( A_1 \right ) 
        + \sum_{k=1}^{K_T} \ee \left [ R_{0,k} \lvert A_2 \right ] \prob \left ( A_2 \right ). \\
    \end{align*}
    First, we bound $\sum_{k=1}^{K_T} \ee \left [ R_{0,k} \lvert A_1 \right ] \prob \left ( A_1 \right )$. Let $\bar{p}_k(s^{\prime}|s,a) = \frac{N_{t_k}(s,a,s^{\prime})}{N_{t_k}(s,a)}$ be the empirical mean for the transition probability at the beginning of episode $k$, where $N_{t_k}(s,a,s^{\prime})$ is the number of visits to $(s,a,s^{\prime})$. Define the confidence set 
    \begin{align*}
        B_k = \left \{ p : \norm{\bar{p}_k(\cdot|s,a) - p(\cdot|s,a)}_1 \leq \beta_k \right \},
    \end{align*}
    where $\beta_k = \sqrt{\frac{14S \log (2A T t_k)}{\max \{1, N_{t_k}(s,a)\}}}$. 
    
    Now, we observe that $ \left \{ A_1 \right \} \subseteq \left \{ \norm{p_k(\cdot|s,a) - p_{\ast}(\cdot|s,a)}_1 > \beta_k \right \}$, otherwise, by Lemma \ref{lm:feasibility}, problem \eqref{eq:objective_cost} would be feasible under $p_k$, and therefore $p_k \in \mathcal{G}$ which contradicts to $p_k \notin \mathcal{G}$. Next, we note that $B_k$ is $\mathcal{F}_{t_k}$-measurable which allows us to use Lemma \ref{lm:posterior_lemma}. Setting $\delta= 1/T$ in Lemma \ref{lm:jaksch_ci} implies that $\prob \left ( \norm{p_k(\cdot|s,a) - p_{\ast}(\cdot|s,a)}_1 > \beta_k \right )$ can be bounded by $\frac{2}{15Tt_k^6}$. Indeed, 
    \begin{align*}
        \prob \left ( \norm{p_k(\cdot|s,a) - p_{\ast}(\cdot|s,a)}_1 > \beta_k \right ) \leq \prob \left ( p_{\ast} \notin B_k \right ) + \prob \left ( p_k \notin B_k \right ) = 2 \prob \left ( p_{\ast} \notin B_k \right ) \leq \frac{2}{15Tt_k^6},
    \end{align*}
    where the last equality follows from Lemma \ref{lm:posterior_lemma} and the last inequality is due to Lemma \ref{lm:jaksch_ci}. Finally, we obtain
    \begin{align*}
        \sum_{k=1}^{K_T} \ee \left [ R_{0,k} \lvert A_1 \right ] \prob \left ( A_1 \right ) \leq \sum_{k=1}^{K_T} \frac{2 (t_{k+1} - t_k)}{15Tt_k^{6}} \leq \frac{2}{15}\sum_{k=1}^{\infty} k^{-6} \leq 1.
    \end{align*}

    To bound $\sum_{k=1}^{K_T} \ee \left [ R_{0,k} \lvert A_2 \right ] \prob \left ( A_2 \right )$, we recall that Algorithm \ref{alg1:psrl_transitions} executes uniform random policy $\pi_0$, when \eqref{eq:objective_cost} is infeasible. Define $t^{\pi_0}_{hit} = \max_{s, s' \in \mathcal{S}} \mathbb{E}^{\pi_0}_{p}\tau_{ss'}$ to be the hitting time of policy $\pi_0$, and $t^{\pi_0}_{cov} = \max_{s \in \mathcal{S}} \mathbb{E}^{\pi_0}_{p}\tau_{cov}$ to be the cover time of policy $\pi_0$. By Lemma \ref{lm:cover_time_up_bound}, $t^{\pi_0}_{hit}$ and $t^{\pi_0}_{cov}$ are finite, and, by the definition of the cover time, it immediately follows that $\sum_{k=1}^{K_T} \ee \left [ R_{0,k} \lvert A_2 \right ] \prob \left ( A_2 \right ) \leq t^{\pi_0}_{cov} \sqrt{T}$. Next, by \cite[Theorem 11.2]{levin2017markov}, we have $t^{\pi_0}_{cov} \leq t^{\pi_0}_{hit}(1 + \frac{1}{2} + ... + \frac{1}{S-1})$, which can be further bounded by $t^{\pi_0}_{hit}(\log S + C) \sqrt{T}$ using the harmonic numbers approximation for a relatively small constant, e.g., $C=2$. Finally, by Assumption \ref{assum:WASP}, we observe
    \begin{align*}
        \sum_{k=1}^{K_T} \ee \left [ R_{0,k} \lvert A_2 \right ] \prob \left ( A_2 \right ) \leq t^{\pi_0}_{hit} \left ( \log S + C \right ) \sqrt{T}  \leq H \left ( \log S + C \right ) \sqrt{T}.
    \end{align*}
    For the second term of \eqref{eq:regret_decompose}, conditioned on the good event, $\{ p_k \in \mathcal{G} \}$, the sampled CMDP is feasible, and the standard analysis of \cite{TS_MDP_Ouyang_2017} can be applied. Lemma \ref{lm:regret_on_good_event} shows that this term can be bounded by $(H+1) \sqrt{2SAT \log (T)} + 49HS \sqrt{AT \log (AT)}$.

    Putting both bounds together, we obtain a regret bound of:
    \begin{align*}
        BR_+ (T;c_0) & 
        \leq O \left ( H S \sqrt{AT \log (AT)} + H \sqrt{T} \log (S) \right ).
    \end{align*}
    
    \paragraph{Bounding regret of auxiliary cost components.}

    Without loss of generality, fix the cost component $c_i$ and its threshold $\tau_i$ for some $i$ and focus on analyzing the $i$-th component regret. Similarly to the decomposition of the main component, we obtain:
    \begin{align}
    \label{eq:regret_decompose_aux}
        BR_+ (T;c_i) & 
        = \ee \left [ \sum_{t=0}^T \left ( c_i(s_t, a_t) - \tau_i \right )_+ \right ] 
        = \sum_{k=1}^{K_T} \ee \left [ R_{i,k} \right ] 
        \notag
        \\
        & 
        = \sum_{k=1}^{K_T} \ee \left [ R_{i,k} \lvert p_k \notin \mathcal{G} \right ] \prob \left ( p_k \notin \mathcal{G} \right ) 
        + \sum_{k=1}^{K_T} \ee \left [ R_{i,k} \lvert p_k \in \mathcal{G} \right ] \prob \left ( p_k \in \mathcal{G} \right )  
        \notag
    \end{align}
    where $R_{i,k} = \sum_{t=t_k}^{t_{k+1}-1} \left [ c_i(s_t, a_t) - (t_{k+1} - t_k)  \tau_i \right ]_+$.

    The first term can be analyzed similarly to the main cost component and bounded by $H (\log S + C) + 1$. The regret bound of the second term is the same as the regret bound of the analogous term of the main cost component. Its analysis is marginally different and provided in Lemma \ref{lm:regret_on_good_event_aux}. $\qed$

\section{Simulation results}
\label{sec:experiments}

In this section, we evaluate the performance of \textsc{PSConRL}. We present \textsc{PSConRL} using Dirichlet priors with parameters $[0.01, \dots, 0.01]$. The Dirichlet distribution is a convenient choice for maintaining posteriors for the transition probability vectors $p(s, a)$ since it is a conjugate prior for categorical and multinomial distributions. Moreover, Dirichlet prior is proven to be highly effective for any underlying MDP in unconstrained problems \cite{whyPSbetter}. 

We consider three algorithms as baselines: \textsc{ConRL} \citep{NEURIPS2020_Brantley}, \textsc{C-UCRL} \citep{pmlr-v120-zheng20a}, and \textsc{UCRL-CMDP} \citep{Singh_CMDP_2020}. \textsc{ConRL} and \textsc{UCRL-CMDP} are OFU algorithms developed for finite- and infinite-horizon setting correspondingly. \textsc{C-UCRL} considers conservative (safe) exploration in the infinite-horizon setting by following a principle of ``pessimism in the face of cost uncertainty''. We run our experiments on three gridworld environments: Marsrover 4x4, Marsrover 8x8 \cite{pmlr-v120-zheng20a}, and Box \cite{Leike_aisafegrid_2017}. To enable fair comparison, all algorithms were extended to the unknown reward/costs and unknown probability transitions setting (see Appendix \ref{apx:experiments} for more experimental details). 

Figure \ref{fig:sum_results} illustrates the simulation results of \textsc{PSConRL}, \textsc{ConRL}, \textsc{C-UCRL}, and \textsc{UCRL-CMDP} algorithms across three benchmark environments. Due to its computational inefficiency \textsc{UCRL-CMDP} is implemented only for the Marsrover 4x4 environment. The top row shows the cumulative regret of the main cost component. The bottom row presents the cumulative constraint violation. 

We first analyze the behavior of the algorithm on Marsrover environments (left and middle columns). The cumulative regret (top row) shows that \textsc{PSConRL} outperforms all three algorithms, except for \textsc{ConRL} for Marsrover 4x4 environment, where both algorithms demonstrate indistinguishable performance. Looking at the cumulative constraint violation (bottom row), we see that \textsc{PSConRL} is comparable with \textsc{C-UCRL}, the only algorithm that addresses safe exploration. In the Box example (right column), \textsc{ConPSRL} significantly outperforms the OFU algorithms, which incur near-linear regret. We note that exploration is relatively costly in this benchmark compared to Marsrover environments (see the difference on the $x$ and $y$-axes in the top row), which suggests that OFU algorithms might be impractical in (at least some) problems where exploration is non-trivial. In Figure \ref{fig:sum_results_app} (in Appendix \ref{apx:experiments}), we further elaborate on the cost performance of the algorithms interpreting regret behavior.

\begin{figure}[t]
    \centering
    \includegraphics[width=0.99\textwidth]{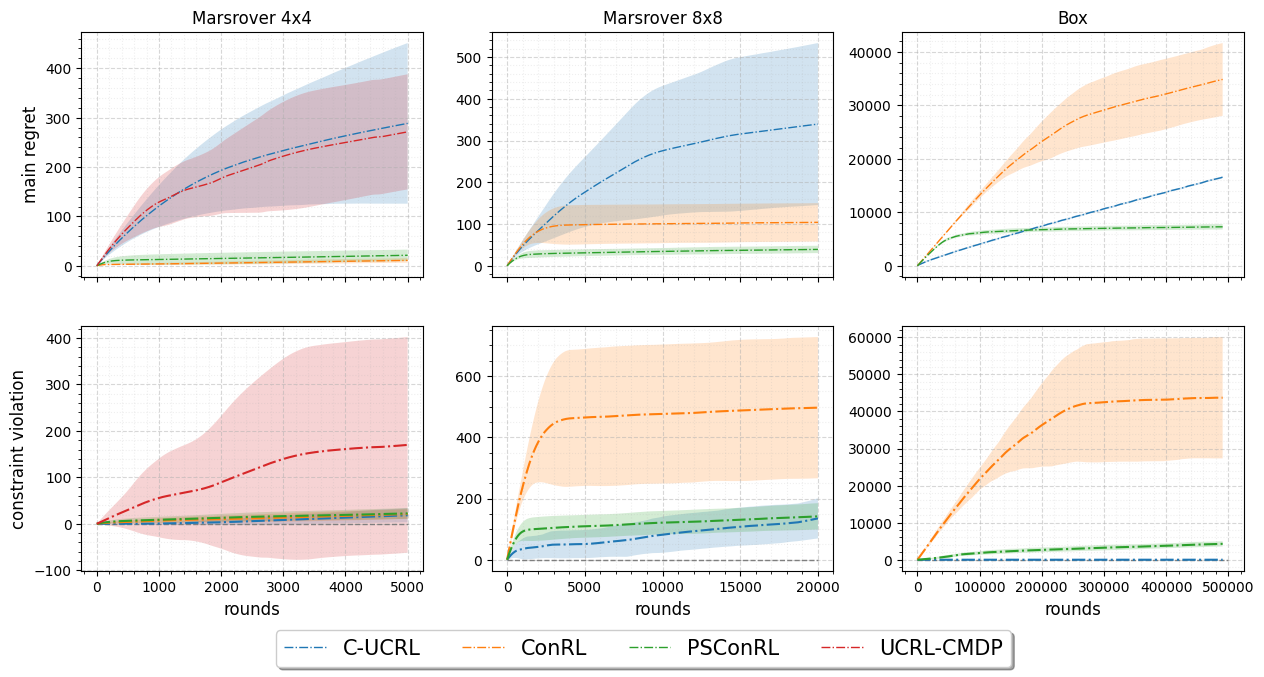}
    \caption{The main regret and constraint violation of the algorithms as a function of the horizon for Marsrover 4x4 (left column), Marsrover 8x8 (middle column), and Box (right column) environments. \textbf{(Top row)} shows the cumulative regret of the main cost component. \textbf{(Bottom row)} shows the cumulative constraint violation. Results are averaged over 100 runs for Marsrover 4x4 and over 30 runs for Marsrover 8x8 and Box.}
    \label{fig:sum_results}
\end{figure}


\section{Related work}
\label{sec:lit_review}

Provably efficient exploration in unknown CMDPs is a recurring theme in reinforcement learning. Numerous algorithms have been discovered for the finite-horizon setting \cite{Efroni_2020_CMDP, NEURIPS2020_Brantley, qiu_2021_cmdp_posterior, Liu_2021}. In the infinite-horizon undiscounted setting, two works \cite{pmlr-v120-zheng20a, Singh_CMDP_2020} that we discussed above propose algorithms based on the OFU principle: with \cite{pmlr-v120-zheng20a} considering safe exploration and establishing $\tilde{O}(SAT^{3/4})$ frequentist regret bound (with no constraint violation) and \cite{Singh_CMDP_2020} establishing $\tilde{O}(D \sqrt{SA} T^{2/3})$ frequentist regret bounds for the main regret and constraints violation, where $D$ is diameter of CMDP $M$. Later, \cite{chen_2022_optimisQlearn} considered optimistic Q-learning providing a tighter frequentist regret bound of $\Tilde{O} (sp(p_{\ast}) S \sqrt{AT})$ for both main regret and constraints violation that strictly improves the result of \cite{Singh_CMDP_2020}, although under the known bias span $sp(p_{\ast})$. Very recently, \cite{Agarwal_2021_PSRL} has presented a regret bound of $\Tilde{O} (H  \sqrt{SAT})$ using posterior sampling for both main regret and constraints violation for the subclass of ergodic CMDPs. While \cite{Agarwal_2021_PSRL} presents promising theoretical results, the setting there appears to be different from ours and does not generalize to communicating CMDPs. Table \ref{tbl:lit_rev_short} summarizes algorithms that address provably efficient exploration in the infinite-horizon undiscounted setting.

\begin{table}[htbp]
  \centering
  \begin{tabular}{ccccccccc}
    \hline
    \multicolumn{1}{c}{Algorithm} && \multicolumn{1}{c}{Main Regret} && \multicolumn{1}{c}{Constraint violation} && \multicolumn{1}{c}{Setting} \\

    \hline

    \textsc{C-UCRL} &&&&&& \\
      
     \cite{pmlr-v120-zheng20a} &&  \multirow{-2}{*}{$\tilde{O}(mSAT^{3/4} )$} && \multirow{-2}{*}{0} && \multirow{-2}{*}{frequentist} & \\
   \cline{0-7}
    
     \textsc{UCRL-CMDP} &  &&&&& \\
     
     \cite{Singh_CMDP_2020} && \multirow{-2}{*}{$\tilde{O}(D\sqrt{SA}T^{2/3} )$} && \multirow{-2}{*}{$\tilde{O}(D\sqrt{SA}T^{2/3} )$} && \multirow{-2}{*}{frequentist} &  \\
   \cline{0-7}

   &&&&&& \\
    \multirow{-2}{*}{\cite{chen_2022_optimisQlearn}} && \multirow{-2}{*}{$\tilde{O}(sp(p_{\ast})S \sqrt{AT} )$} && \multirow{-2}{*}{$\tilde{O}( sp(p_{\ast})S \sqrt{AT} )$} && \multirow{-2}{*}{frequentist}  \\

     \cline{0-7}

     \textsc{CMDP-PSRL} &&&&&& \\
    \cite{Agarwal_2021_PSRL} &&  \multirow{-2}{*}{$\tilde{O}( H \sqrt{SAT})$} && \multirow{-2}{*}{$\tilde{O}( H \sqrt{SAT})$} && \multirow{-2}{*}{unspecified} \\
   
   \cline{0-7}
    \textsc{PSConRL} &&&&&& \\ 
    (this work) &&  \multirow{-2}{*}{$\tilde{O}( HS \sqrt{AT})$} && \multirow{-2}{*}{$\tilde{O}( HS \sqrt{AT})$} && \multirow{-2}{*}{Bayesian} \\
   \hline
  \end{tabular}
  \caption{Summary of work on provably efficient constrained reinforcement learning in the infinite-horizon undiscounted setting. $S$ and $A$ represent number of states and actions, $m$ is the number of constraints, $T$ is the total horizon,  $H$ is the bound of the hitting time, and $D$ is diameter of CMDP.}
  \label{tbl:lit_rev_short}
\end{table}


Among other related work, Lagrangian relaxation is a popular technique for solving CMDPs. The works \cite{Achiam_CMDP2017, tessler2018reward} present constrained policy optimization approaches that have demonstrated prominent successes in artificial environments. However, these approaches are notoriously sample-inefficient and lack theoretical guarantees. More scalable versions of the Lagrangian-based methods have been proposed in \cite{Chow_CMDP2018, qiu_2021_cmdp_posterior, Liu_2021, chen2021primaldual}  (see \cite{ijcai2021p614} for a survey). In general, as discussed in \cite{ijcai2021p614}, the Lagrangian relaxation method can achieve high performance, but this approach is sensitive to the initialization of the Lagrange multipliers and learning rate.

\section{Conclusion}
\label{sec:conclusion}
Our paper has presented a novel algorithm for efficient exploration in constrained reinforcement learning under the infinite-horizon undiscounted average cost criterion. Our \textsc{PSConRL} algorithm achieves near-optimal Bayesian regret bounds for each cost component, filling a gap in the theoretical analysis of posterior sampling for communicating CMDPs. We validate our approach using simulations on three gridworld domains and show that \textsc{PSConRL} quickly converges to the optimal policy and consistently outperforms existing algorithms. Our work represents a solid step toward designing reinforcement learning algorithms for real-world problems.

The feasibility guarantees provided in this work might be of great value for further research in constrained reinforcement learning. In particular, we believe that our theoretical analysis can be extended to the frequentist regret bound by incorporating existing methods such as \cite{NIPS2017_Shipra_OPSRL} or \cite{tiapkin2022optimistic}.

\acks{We thank a bunch of people and funding agency.}

\newpage
\bibliography{main}

\newpage
\appendix


\section{Omitted details for Section \ref{sec:regret_bound}}
\subsection{Proof of Feasibility lemma (Lemma \ref{lm:feasibility})}
\label{apx:proof_feasibility}

\begin{proof}
    Fix some $i \in \{1, \dots, m\}$. Further, we will omit index $i$ and write $c$ and $\tau$ instead of $c_i$ and $\tau_i$. With slight abuse of notation, we rewrite the equation \eqref{eq:bellman} in vector form:
    \begin{equation}
        \label{eq:bellman_vector}
        J^{\pi, p_{\ast}, c} + v^{\pi, p_{\ast}}_s = c_{s,a} + \left ( p^{\ast}_{s,a} \right )^{\intercal} v^{\pi, p_{\ast}},
    \end{equation}
    where $J^{\pi, p_{\ast}, c} = J^{\pi}(c; p_{\ast})$, $c(s,a) = c_{s,a}$, $v^{\pi, p_{\ast}}_s=v^{\pi}(s; p_{\ast})$, and $p^{\ast}_{s,a} = p_{\ast}(\cdot | s, a)$. 

    Let $P_{\bar{\pi}}^k$ be the transition matrix whose rows are formed by the vectors $p^k_{s,\bar{\pi}(s)}$, where $p^{k}_{s,a} = p_{k}(\cdot | s, a)$, and $P^{\ast}_{\bar{\pi}}$ be the transition matrix whose rows are formed by the vectors $p^{\ast}_{s,\bar{\pi}(s)}$. Since $\norm{p_k(\cdot|s,a) - p_{\ast}(\cdot|s,a)}_1 \leq \sqrt{\frac{14S \log (2A T t_k)}{\max \{1, N_{t_k}(s,a)\}}}$, $N_{t_k}(s,a) \geq \sqrt{T}$ for all $(s,a)$,  and  by Assumption \ref{assum:WASP} the span of the bias function $v^{\bar{\pi}, p_{\ast}}$ is at most $H$, we observe
    \begin{equation*}
        \left ( p_k(\cdot|s,a) - p_{\ast}(\cdot|s,a) \right )^\intercal v^{\bar{\pi}, p_{\ast}} \leq \norm{p_k(\cdot|s,a) - p_{\ast}(\cdot|s,a)}_1 \norm{v^{\bar{\pi}, p_{\ast}} }_\infty \leq \delta H
    \end{equation*}
    where $\delta = \sqrt{\frac{14S \log (2A T t_k)}{\sqrt{T}}}$. Above implies 
    \begin{equation}
        \label{eq:lim_matrices_diff}
        \left( P_{\bar{\pi}}^k - P^{\ast}_{\bar{\pi}} \right ) v^{\bar{\pi}, p_{\ast}} \leq \delta H \textbf{1}
    \end{equation}
    where $\textbf{1}$ is the vector of all 1s.
    
    Following \cite{NIPS2017_Shipra_OPSRL}, let $(P_{\bar{\pi}}^k)^{\ast}$ denote the limiting matrix for Markov chain with transition matrix $P_{\bar{\pi}}^k$. Observe that $P_{\bar{\pi}}^k$ is aperiodic and irreducible because of Assumption \ref{assum:slater}. This implies that $(P_{\bar{\pi}}^k)^{\ast}$ is of the form $\textbf{1}\boldsymbol{q}^{\intercal}$ where $\boldsymbol{q}$ is the stationary distribution of $P_{\bar{\pi}}^k$ (refer to (A.4) in \cite{Puterman_mdp}). Also, $(P_{\bar{\pi}}^k)^{\ast} P_{\bar{\pi}}^k = (P_{\bar{\pi}}^k)^{\ast}$ and $(P_{\bar{\pi}}^k)^{\ast} \textbf{1} = \textbf{1}$.

    Therefore, the gain of policy $\bar{\pi}$
    \begin{equation*}
        J^{\bar{\pi}, p_k, c} \textbf{1} = (c_{\bar{\pi}}^{\intercal} \boldsymbol{q}) \textbf{1} = (P_{\bar{\pi}}^k)^{\ast} c_{\bar{\pi}}
    \end{equation*}
    where $c_{\bar{\pi}}$ is the S dimensional vector $\left [ c_{s, \bar{\pi}(s)} \right ]_{s=1,\dots,S}$. Now
    \begin{align*}
        J^{\bar{\pi}, p_k, c} \textbf{1} - J^{\bar{\pi}, p_{\ast}, c} \textbf{1} & = (P_{\bar{\pi}}^k)^{\ast} c_{\bar{\pi}} - J^{\bar{\pi}, p_{\ast}, c} \textbf{1} \\
        & = (P_{\bar{\pi}}^k)^{\ast} c_{\bar{\pi}} - J^{\bar{\pi}, p_{\ast}, c} \left( (P_{\bar{\pi}}^k)^{\ast} \textbf{1} \right ) 
        && \quad \tag{using $(P_{\bar{\pi}}^k)^{\ast} \textbf{1} = \textbf{1}$} \\
        & = (P_{\bar{\pi}}^k)^{\ast} \left( c_{\bar{\pi}} - J^{\bar{\pi}, p_{\ast}, c}  \textbf{1} \right ) \\
        & = (P_{\bar{\pi}}^k)^{\ast} \left( I - P^{\ast}_{\bar{\pi}} \right ) v^{\bar{\pi}, p_{\ast}}
        && \quad \tag{using \eqref{eq:bellman_vector}} \\
        & = (P_{\bar{\pi}}^k)^{\ast} \left( P_{\bar{\pi}}^k - P^{\ast}_{\bar{\pi}} \right ) v^{\bar{\pi}, p_{\ast}}
        && \quad \tag{using $(P_{\bar{\pi}}^k)^{\ast} P_{\bar{\pi}}^k = (P_{\bar{\pi}}^k)^{\ast}$} \\
        & \leq H \delta \textbf{1}
        && \quad \tag{using \eqref{eq:lim_matrices_diff} and $(P_{\bar{\pi}}^k)^{\ast} \textbf{1} = \textbf{1}$}
    \end{align*}

    \noindent We finish proof by observing that $H \delta \leq \gamma$. Thus, 
    \begin{equation*}
        J^{\bar{\pi}, p_k, c} - J^{\bar{\pi}, p_{\ast}, c} \leq H \delta \leq \gamma.
    \end{equation*}
\end{proof}

\subsection{Proof of Lemma \ref{lm:cover_time_up_bound}}
\label{apx:upper_bound_cover_time}

\begin{proof}
    By \cite[Proposition 8.3.1]{Puterman_mdp}, for any communicating CMDP, there exists a policy $\tilde{\pi}$ which induces an ergodic Markov chain. We show that uniform random policy $\pi_0$ also induces an ergodic Markov chain.

    Let $P_{\tilde{\pi}}$ and $P_{\pi_0}$ be the transition matrices for policies $\tilde{\pi}$ and $\pi_0$ with elements $\tilde{p}_{ij}$ and $p_{ij}$, correspondingly, $1 \leq i,j \leq S$. Note that every nonzero element in $P_{\tilde{\pi}}$ is also nonzero in $P_{\pi_0}$ because $\pi_0$ assigns a nonzero probability to every action that has nonzero probability in $\tilde{\pi}$, and other elements are non-negative. Assume that there exist two states $s_i, s_j$ such that $p_{ij}^t = 0$ for any $t > 0$. Then, $\tilde{p}_{ij}^t = 0$ for any $t > 0$, which contradicts to the ergodicity of $P_{\tilde{\pi}}$. Therefore, for any two states $s_i, s_j$ there exists finite $t$ such that $p_{ij}^t > 0$. Thus, $P_{\pi_0}$ corresponds to ergodic chain, and $t^{\pi_0}_{hit}$ is finite as a hitting time of ergodic CMDP, where we recall that $t^{\pi_0}_{hit} = \max_{s, s' \in \mathcal{S}} \mathbb{E}^{\pi_0}_{p}\tau_{ss'}$.
\end{proof}

\subsection{Regret of the main cost on the good event}
\label{apx:regret_on_the_good_event}
\begin{lemma}[Adapted from Theorem 1 of \cite{TS_MDP_Ouyang_2017}]
    \label{lm:regret_on_good_event}
    Under Assumption \ref{assum:WASP}, conditioned on the good event $\{ p_k \in \mathcal{G} \}$,
    \begin{equation*}
        \sum_{k=1}^{K_T} \ee \left [ R_{0,k} \lvert p_k \in \mathcal{G} \right ] \prob \left ( p_k \in \mathcal{G} \right ) \leq (H+1) \sqrt{2SAT \log (T)} + 49HS \sqrt{AT \log (AT)}.
    \end{equation*}
\end{lemma}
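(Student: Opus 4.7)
The plan is to adapt the analysis of \cite{TS_MDP_Ouyang_2017} for posterior sampling in unconstrained average-cost MDPs to our constrained setting, using the fact that on the good event $\{p_k \in \mathcal{G}\}$ the LP \eqref{eq1}-\eqref{eq4} is feasible under $p_k$, so $\pi_k$ is the LP-optimal policy for $(c_0, p_k)$ and satisfies the Bellman equation \eqref{eq:bellman} for $(\pi_k, c_0, p_k)$. I would first decompose the episodic regret by inserting $J^{\pi_k}(c_0; p_k)$:
\[
R_{0,k} \;\le\; \Big|\underbrace{\sum_{t=t_k}^{t_{k+1}-1}\!\!\big[c_0(s_t,a_t)-J^{\pi_k}(c_0;p_k)\big]}_{=:\,D^{(1)}_{k}}\Big| \;+\; \underbrace{T_k\,\big[J^{\pi_k}(c_0;p_k)-J^{\pi_*}(c_0;p_*)\big]_+}_{=:\,D^{(2)}_{k}},
\]
and bound the two pieces separately in expectation, restricted to $\{p_k \in \mathcal{G}\}$.

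For $D^{(2)}_k$, I would exploit the second stopping criterion, $T_k \le T_{k-1}+1$, together with $J \in [0,1]$. Since $T_{k-1}+1$ is $\mathcal{F}_{t_k}$-measurable and $1_{p_k\in\mathcal{G}}$ depends only on $p_k$, Lemma \ref{lm:posterior_lemma} applied to the function $(p,X) \mapsto X \cdot 1_{p\in\mathcal{G}} \cdot J^{\pi^*(p)}(c_0;p)$ lets me exchange $p_k$ and $p_*$ inside the expectation; combined with the trivial bound $[\cdot]_+ \le 1$, this gives $\sum_k \ee[D^{(2)}_k \cdot 1_{p_k\in\mathcal{G}}] \le \ee[K_T] \le \sqrt{2SAT\log T}$, supplying the ``$1 \cdot \sqrt{2SAT\log T}$'' contribution in the stated bound.

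For $D^{(1)}_k$, applying the Bellman equation and inserting $\pm v^{\pi_k}(s_{t+1}; c_0, p_k)$ and $\pm p_*(\cdot|s_t, a_t)^\intercal v^{\pi_k}(c_0, p_k)$ splits each per-step term into three pieces: (a) a telescoping bias gap that collapses per episode to $v^{\pi_k}(s_{t_k}) - v^{\pi_k}(s_{t_{k+1}})$, whose absolute value is at most $sp(p_k) \le H$; (b) a martingale-difference sequence with increments bounded by $H$; and (c) a transition-error term $\sum_t [p_*(\cdot|s_t,a_t) - p_k(\cdot|s_t,a_t)]^\intercal v^{\pi_k}$. Summing (a) across episodes gives $HK_T \le H\sqrt{2SAT\log T}$, which supplies the ``$H \cdot \sqrt{2SAT\log T}$'' term. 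For (b), Azuma-Hoeffding yields $O(H\sqrt{T\log T})$ in expectation, absorbed into the dominant terms. For (c), I would insert the empirical estimate $\hat p_k$, bound $\|\hat p_k - p_*\|_1$ and $\|\hat p_k - p_k\|_1$ by Lemma \ref{lm:jaksch_ci} (the latter via Lemma \ref{lm:posterior_lemma}), use $\|v^{\pi_k}\|_\infty \le H$, and apply the pigeonhole sum $\sum_{t=1}^T 1/\sqrt{N_{t_k}(s_t,a_t)} = O(\sqrt{SAT})$ together with the doubling criterion $N_t \le 2N_{t_k}$ in the spirit of \cite{JMLR:v11:jaksch10a}; this yields the $49HS\sqrt{AT\log(AT)}$ contribution.

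The main obstacle is the interaction of the positive-part operator in the definition of $R_{0,k}$ with the posterior-sampling identity restricted to the good event. \cite{TS_MDP_Ouyang_2017} work with signed episodic regret and exploit the identity $\ee[T_{k-1}(J^{\pi_k}(c_0;p_k) - J^{\pi_*}(c_0;p_*))] = 0$; here the $[\cdot]_+$ operator forces me to pass through absolute-value bounds in the decomposition of $D^{(1)}_k$, and the argument for $D^{(2)}_k$ hinges on first bounding the positive part by $1$ after pulling out the $T_{k-1}+1$ envelope so that the posterior-sampling identity can still be applied. A secondary subtlety is that $1_{p_k\in\mathcal{G}}$ is $\sigma(p_k)$-measurable rather than $\mathcal{F}_{t_k}$-measurable, so Lemma \ref{lm:posterior_lemma} must be invoked by folding the indicator into the test function $g$ rather than treating it as part of the ambient $\mathcal{F}_{t_k}$-measurable variable $X$.
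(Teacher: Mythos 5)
Your proposal follows essentially the same route as the paper: the paper's proof of this lemma is precisely the Ouyang-style decomposition of the episodic regret (after applying the Bellman equation for $(\pi_k, c_0, p_k)$, which is well defined on the good event) into a loss-difference term bounded by $K_T \le \sqrt{2SAT\log(T)}$ via the posterior-sampling swap, a telescoping bias term bounded by $HK_T$, and a transition-error term bounded by $49HS\sqrt{AT\log(AT)}$ via the $L^1$ confidence sets and the doubling criterion --- i.e., your $D^{(2)}_k$, (a), and (b)+(c), with the martingale part (b) vanishing exactly under the expectation rather than requiring Azuma--Hoeffding. The one substantive difference is that you try to carry the $[\cdot]_+$ operator through the argument, whereas the paper silently drops it and works with signed sums; note that your stated treatment of $D^{(2)}_k$ does not quite close, since Lemma \ref{lm:posterior_lemma} is an identity of first moments and cannot be pushed inside the nonlinear map $[J^{\pi_k}(c_0;p_k)-J^{\pi_*}(c_0;p_*)]_+$ (the term $J^{\pi_*}(c_0;p_*)$ is not $\mathcal{F}_{t_k}$-measurable, and conditionally on $\mathcal{F}_{t_k}$ the two losses are i.i.d., so the conditional expectation of the positive part is half the expected absolute gap, not zero) --- the paper avoids this by applying the swap only to the signed quantity $\ee[(T_{k-1}+1)J^{\pi_k}(c_0;p_k)]$.
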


Most of the analysis here recovers the analysis of \cite{TS_MDP_Ouyang_2017}. Nonetheless, for the sake of clarity, we provide the complete proof of Lemma \ref{lm:regret_on_good_event}.

\begin{proof}
    Conditioned on the good event $\{ p_k \in \mathcal{G} \}$, every policy $\pi_k$ for $k=1, \dots, K_T$ of Algorithm \ref{alg1:psrl_transitions} is well defined. Therefore, we can apply the Bellman equation \eqref{eq:bellman} to $c_0(s_t,a_t)$, and decompose $R_{0,k}$ into the following terms.
    \begin{align}
        & \sum_{k=1}^{K_T} \ee \left [ R_{0,k} \lvert p_k \in \mathcal{G} \right ]  \prob \left ( p_k \in \mathcal{G} \right ) \leq \sum_{k=1}^{K_T} \ee \left [ R_{0,k} \lvert p_k \in \mathcal{G} \right ]
        =  
        \sum_{k=1}^{K_T} \ee \left [ \sum_{t=t_k}^{t_{k+1}-1} \big( c_0(s_t,a_t)   -  J^{\pi_{\ast}}(c_0; p_{\ast}) \big) \right ] 
        \notag\\
        & = \sum_{k=1}^{K_T} \ee \left [ \sum_{t=t_k}^{t_{k+1}-1} \left( J^{\pi_k}(c_0; p_k)   -  J^{\pi_{\ast}}(c_0; p_{\ast}) +  v_{\pi_k}(s_t,p_k) -  \sum_{s' \in \mathcal S}p_k(s'|s_t,a_t)v_{\pi_k}(s',p_k)
        \right)\right]
        \notag\\
        & =
        \sum_{k=1}^{K_T} \ee \left [ \sum_{t=t_k}^{t_{k+1}-1} \big( J^{\pi_k}(c_0; p_k) -  J^{\pi_{\ast}}(c_0; p_{\ast}) \big) \right ]
        +
        \sum_{k=1}^{K_T} \ee \left[\sum_{t=t_k}^{t_{k+1}-1}  \left[  v_{\pi_k}(s_t,p_k) -  v_{\pi_k}(s_{t+1},p_k)
        \right]\right]
        \notag\\
        & +
        \sum_{k=1}^{K_T} \ee \left[\sum_{t=t_k}^{t_{k+1}-1}  \left[ v_{\pi_k}(s_{t+1},p_k) - \sum_{s' \in \mathcal S}p(s'|s_t,a_t)v_{\pi_k}(s',p_k)
        \right]\right]
        = R_0+R_1+R_2.
        \notag
        \label{eq:costregretbound}
    \end{align}
    Next, applying lemmas \ref{lm:R0}, \ref{lm:R1}, \ref{lm:R2} to $R_0$, $R_1$, $R_2$ correspondingly, gives us the result.
\end{proof}

\subsection{Regret of the auxiliary costs on the good event}
\label{apx:regret_on_the_good_event_aux}

\begin{lemma}
    \label{lm:regret_on_good_event_aux}
    Under Assumption \ref{assum:WASP}, conditioned on the good event $\{ p_k \in \mathcal{G} \}$,
    \begin{equation*}
        \sum_{k=1}^{K_T} \ee \left [ R_{i,k} \lvert p_k \in \mathcal{G} \right ] \prob \left ( p_k \in \mathcal{G} \right ) \leq (H+1) \sqrt{2SAT \log (T)} + 49HS \sqrt{AT \log (AT)}.
    \end{equation*}
\end{lemma}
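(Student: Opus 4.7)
The plan is to mirror the $R_0+R_1+R_2$ decomposition of Lemma \ref{lm:regret_on_good_event}, now applied to $c_i$ rather than $c_0$. First I drop the positive part in $R_{i,k}$ to upper bound it by $\sum_{t=t_k}^{t_{k+1}-1}(c_i(s_t,a_t)-\tau_i)$, and then use the Bellman equation \eqref{eq:bellman} for the scalar cost $c_i$ and the bias $v^{\pi_k}(\cdot;c_i,p_k)$ under the sampled transitions $p_k$ and the policy $\pi_k$. The policy $\pi_k$ is well-defined on $\{p_k\in\mathcal G\}$ since the LP \eqref{eq1}-\eqref{eq4} is then feasible, so this manipulation is legitimate. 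Rewriting $c_i(s_t,a_t)$ via \eqref{eq:bellman} and adding and subtracting $v^{\pi_k}(s_{t+1};c_i,p_k)$ produces three summands $R_0^{(i)},R_1^{(i)},R_2^{(i)}$ exactly analogous to the decomposition used for the main cost.

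The key (and only) departure from the main-cost argument concerns $R_0^{(i)}=\sum_k\ee[T_k(J^{\pi_k}(c_i;p_k)-\tau_i)\mid p_k\in\mathcal G]\prob(p_k\in\mathcal G)$. On the good event, the algorithm sets $\pi_k$ to the optimal solution of LP \eqref{eq1}-\eqref{eq4} under $p_k$, so $\pi_k$ is a feasible solution of that LP and by construction satisfies $J^{\pi_k}(c_i;p_k)\leq\tau_i$ for every $i\in\{1,\dots,m\}$. Hence $R_0^{(i)}\leq 0$ directly, with no appeal to the posterior-sampling identity of Lemma \ref{lm:posterior_lemma}. This is strictly easier than the analogous step for the main cost, where posterior sampling was needed to cancel $J^{\pi_k}(c_0;p_k)$ against $J^{\pi_\ast}(c_0;p_\ast)$ in expectation.

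The remaining terms $R_1^{(i)}$ and $R_2^{(i)}$ depend on $c_i$ only through the bias $v^{\pi_k}(\cdot;c_i,p_k)$, whose span is bounded by $H$ because $\mathcal G\subseteq\Omega_*$ implies $sp(p_k)\leq H$ on the good event (and the bound of Assumption \ref{assum:WASP} on $p_\ast$ is irrelevant here, since $R_1^{(i)}$ and $R_2^{(i)}$ only reference the sampled-model bias). Consequently the bounds derived for $R_1$ and $R_2$ in the proof of Lemma \ref{lm:regret_on_good_event}, following \cite{TS_MDP_Ouyang_2017}, apply verbatim with $c_0$ replaced by $c_i$: the per-episode bias telescopes together with Lemma 1 of \cite{TS_MDP_Ouyang_2017} to give $R_1^{(i)}\leq(H+1)K_T\leq(H+1)\sqrt{2SAT\log T}$, while the confidence-set and martingale-difference analysis gives $R_2^{(i)}\leq 49HS\sqrt{AT\log(AT)}$. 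Summing the three contributions produces the claimed bound.

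I do not expect a substantive obstacle. The one point that merits care is ensuring that $\mathcal G\subseteq\Omega_*$ so that the span bound $sp(p_k)\leq H$ remains valid on the good event (this is immediate from the definition of $\mathcal G$), and that the $R_2^{(i)}$ analysis, which blends sampled-model transitions $p_k$ with empirical next states generated by $p_\ast$, still goes through; this last point is purely about norms of $p_\ast-p_k$ times $\|v^{\pi_k}(\cdot;c_i,p_k)\|_\infty\leq H$, hence identical in form to the main-cost argument.
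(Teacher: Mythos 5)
Your proposal is correct and takes essentially the same route as the paper's proof: the same Bellman-equation decomposition into $R_0+R_1+R_2$, the same observation that on $\{p_k\in\mathcal G\}$ the LP-feasibility of $\pi_k$ gives $J^{\pi_k}(c_i;p_k)\le\tau_i$ so the $R_0$ term is nonpositive (with no need for the posterior-sampling identity), and the same reuse of the $R_1$ and $R_2$ bounds from the main-cost analysis.
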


\begin{proof}
    Similarly to lemma \ref{lm:regret_on_good_event}, conditioned on the good event $\{ p_k \in \mathcal{G} \}$, we can decompose $R_{i,k}$ as 
    \begin{align}
        & \sum_{k=1}^{K_T} \ee \left [ R_{i,k} \lvert p_k \in \mathcal{G} \right ]  \prob \left ( p_k \in \mathcal{G} \right ) \leq \sum_{k=1}^{K_T} \ee \left [ R_{i,k} \lvert p_k \in \mathcal{G} \right ]
        =  
        \sum_{k=1}^{K_T} \ee \left [ \sum_{t=t_k}^{t_{k+1}-1} \big( c_i(s_t,a_t)   -  \tau_i \big) \right ] 
        \notag\\
        & = \sum_{k=1}^{K_T} \ee \left [ \sum_{t=t_k}^{t_{k+1}-1} \left( J^{\pi_k}(c_i; p_k)   -  \tau_i +  v_{\pi_k}(s_t,p_k) -  \sum_{s' \in \mathcal S}p_k(s'|s_t,a_t)v_{\pi_k}(s',p_k)
        \right)\right]
        \notag\\
        & =
        \sum_{k=1}^{K_T} \ee \left [ \sum_{t=t_k}^{t_{k+1}-1} \big( J^{\pi_k}(c_0; p_k) -  \tau_i \big) \right ]
        +
        \sum_{k=1}^{K_T} \ee \left[\sum_{t=t_k}^{t_{k+1}-1}  \left[  v_{\pi_k}(s_t,p_k) -  v_{\pi_k}(s_{t+1},p_k)
        \right]\right]
        \notag\\
        & +
        \sum_{k=1}^{K_T} \ee \left[\sum_{t=t_k}^{t_{k+1}-1}  \left[ v_{\pi_k}(s_{t+1},p_k) - \sum_{s' \in \mathcal S}p(s'|s_t,a_t)v_{\pi_k}(s',p_k)
        \right]\right]
        = R_0+R_1+R_2.
        \notag
    \end{align}
    \noindent Next, we note that $\left( J^{\pi_k}(c_i; p_k) - \tau_i \right)$ is negative on the good event $\{ p_k \in \mathcal{G} \}$ for all $k$, and term $R_0$ can be dismissed. $R_1$ and $ R_2$ regret terms can be bounded by lemmas \ref{lm:R1} and \ref{lm:R2} correspondingly.
\end{proof}

\subsection{Auxiliary lemmas}

\begin{lemma}[Lemma 3 from \cite{TS_MDP_Ouyang_2017}]
For any cost function $c : \mathcal{S} \times \mathcal{A} \xrightarrow{} [0, 1]$, 
\label{lm:R0}
\begin{align*}
    \ee \left [ \sum_{k=1}^{K_T}  \sum_{t=t_k}^{t_{k+1}-1}  \big ( J^{\pi_k}(c;p_k) - J^{\pi_{\ast}}(c;p_{\ast}) \big ) \right ]  \leq K_T \leq \sqrt{2SAT \log(T)}.
\end{align*}
\end{lemma}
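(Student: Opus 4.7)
The plan is to combine the two stopping criteria of Algorithm \ref{alg1:psrl_transitions} with the stopping-time posterior sampling identity of Lemma \ref{lm:posterior_lemma}. Within any episode $k$, both $p_k$ and $\pi_k$ are fixed, so the inner summand is constant and the double sum collapses to $\sum_{k=1}^{K_T} T_k \bigl( J^{\pi_k}(c;p_k) - J^{\pi_\ast}(c;p_\ast) \bigr)$, where $T_k = t_{k+1}-t_k$. The strategy is then to swap $p_k$ for $p_\ast$ in expectation (at the cost of a single unit of $T$ per episode) and telescope.

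First I would exploit stopping rule (ii), which guarantees $T_k \leq T_{k-1}+1$. Since costs lie in $[0,1]$ we have $J^{\pi_k}(c;p_k) \geq 0$, so $T_k\, J^{\pi_k}(c;p_k) \leq (T_{k-1}+1)\, J^{\pi_k}(c;p_k)$. Crucially, $T_{k-1}+1$ is $\mathcal{F}_{t_k}$-measurable, and $J^{\pi_k}(c;p_k)$ is a deterministic functional of $p_k$ alone (the evaluation at cost $c$ of the optimal CMDP policy for transitions $p_k$). Applying Lemma \ref{lm:posterior_lemma} with $X = T_{k-1}+1$ and $g(p,X) = X \cdot J^{\pi^{\star}(p)}(c;p)$, where $\pi^{\star}(p)$ denotes an optimal policy for the CMDP with transitions $p$, then yields $\ee\bigl[(T_{k-1}+1)\, J^{\pi_k}(c;p_k)\bigr] = \ee\bigl[(T_{k-1}+1)\, J^{\pi_\ast}(c;p_\ast)\bigr]$. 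Combining the two steps gives $\ee\bigl[T_k\bigl(J^{\pi_k}(c;p_k) - J^{\pi_\ast}(c;p_\ast)\bigr)\bigr] \leq \ee\bigl[(T_{k-1}+1-T_k)\, J^{\pi_\ast}(c;p_\ast)\bigr] \leq \ee[T_{k-1}+1-T_k]$, using $J^{\pi_\ast}(c;p_\ast) \leq 1$.

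Summing over $k=1,\dots,K_T$ and telescoping produces $\sum_{k=1}^{K_T}(T_{k-1}+1-T_k) = T_0 + K_T - T_{K_T} \leq K_T$, which settles the first inequality. The second inequality $K_T \leq \sqrt{2SAT \log T}$ is a standard sample-pathwise consequence of the doubling rule (i) and is imported verbatim from Lemma 1 of \cite{TS_MDP_Ouyang_2017} (it is also cited explicitly earlier in the proof of Theorem \ref{thm:regret_bound}). The hard part will be step two: Lemma \ref{lm:posterior_lemma} must be applied not to $p_k$ itself but to the composition $p \mapsto J^{\pi^{\star}(p)}(c;p)$, which is well defined only when the sampled CMDP is feasible. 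This is precisely the regime $\{p_k \in \mathcal{G}\}$ in which Lemma \ref{lm:R0} is subsequently invoked inside Lemma \ref{lm:regret_on_good_event}, so the required measurability and well-posedness of $\pi^{\star}(p_k)$ are consistent with the context of use.
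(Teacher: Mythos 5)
Your argument is correct and is essentially the proof of Lemma 3 in \cite{TS_MDP_Ouyang_2017}, which the paper imports without reproving: collapse the inner sum to $T_k\bigl(J^{\pi_k}(c;p_k)-J^{\pi_\ast}(c;p_\ast)\bigr)$, use $T_k \le T_{k-1}+1$ from the second stopping criterion together with $J\ge 0$, swap $p_k$ for $p_\ast$ via Lemma \ref{lm:posterior_lemma} applied with the $\mathcal{F}_{t_k}$-measurable weight $T_{k-1}+1$, and telescope. The only technicalities left to tidy are (i) rewriting the sum over the random index $K_T$ as $\sum_{k\ge 1}\mathbb{I}\{t_k\le T\}(\cdots)$ so that the $\mathcal{F}_{t_k}$-measurable indicator can be absorbed into $X$ before invoking the posterior sampling identity, and (ii) defining $g$ so that it returns $J^{\pi_0}(c;p)$ whenever the CMDP with transitions $p$ is infeasible, which makes $p\mapsto J^{\pi^{\star}(p)}(c;p)$ a well-defined measurable map everywhere and resolves the well-posedness concern you raise at the end.
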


\begin{lemma}[Lemma 4 from \cite{TS_MDP_Ouyang_2017}]
\label{lm:R1}
\begin{align*}
    \ee \left [ \sum_{k=1}^{K_T} \sum_{t=t_k}^{t_{k+1}-1} \big(v_{\pi_k}(s_{t}, p_k) - v_{\pi_k}(s_{t+1}, p_k)\big) \right ]  \leq HK_T \leq H \sqrt{2SAT \log(T)}.
\end{align*}
\end{lemma}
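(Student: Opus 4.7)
The plan is to exploit the telescoping structure of the inner sum together with a uniform span bound on the bias function that holds for every sampled transition.

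First, I would observe that for each episode $k$ the inner sum telescopes:
\begin{equation*}
\sum_{t=t_k}^{t_{k+1}-1}\bigl(v_{\pi_k}(s_t,p_k)-v_{\pi_k}(s_{t+1},p_k)\bigr)=v_{\pi_k}(s_{t_k},p_k)-v_{\pi_k}(s_{t_{k+1}},p_k),
\end{equation*}
so the double sum collapses to $K_T$ endpoint differences, each living entirely at the episode boundaries.

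Second, I would argue that each such difference is at most $H$ in absolute value. By Assumption \ref{assum:WASP} the prior $f_1$ is supported on $\Omega_*$, and since Bayesian updates cannot enlarge the support of a prior, the posterior $f_{t_k}$ is also supported on $\Omega_*$. Hence the sampled transition $p_k$ satisfies $t_{hit}(p_k)\leq H$ almost surely. Because $\pi_k$ is the optimal policy for the sampled CMDP with transitions $p_k$, the remark in Section \ref{sec:problem_formulation} (namely $sp(p)\leq H$ whenever $t_{hit}(p)\leq H$) yields $\max_{s}v_{\pi_k}(s;c,p_k)-\min_{s}v_{\pi_k}(s;c,p_k)\leq H$, so
\begin{equation*}
\bigl|v_{\pi_k}(s_{t_k},p_k)-v_{\pi_k}(s_{t_{k+1}},p_k)\bigr|\leq H.
\end{equation*}

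Third, I would sum across the $K_T$ episodes: the total telescoped mass is bounded by $HK_T$ pathwise. Taking expectation and invoking the deterministic bound $K_T\leq\sqrt{2SAT\log(T)}$ from Lemma~1 of \cite{TS_MDP_Ouyang_2017} delivers the stated bound $H\sqrt{2SAT\log(T)}$.

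The main obstacle is the span transfer from the true CMDP to every sampled one: Assumption \ref{assum:WASP} only asserts the hitting-time bound for $p_{\ast}$, not \emph{a priori} for $p_k$. This is resolved cleanly by the Bayesian setup, where the posterior inherits the support of the prior, so one never needs to pay a union bound or concentration cost for this structural property. A secondary mild technicality is that $K_T$ is a random stopping time, but since $K_T$ is deterministically bounded by the episode-construction argument of \cite{TS_MDP_Ouyang_2017}, no further measure-theoretic care is required; the expectation slides outside the bound without incident.
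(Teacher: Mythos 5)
Your proof is correct and follows essentially the same route as the cited source: the paper does not reprove this lemma but imports it as Lemma 4 of \cite{TS_MDP_Ouyang_2017}, whose argument is exactly your telescoping of the inner sum, the span bound $sp(p_k)\leq H$ inherited from the posterior's support in $\Omega_*$, and the deterministic episode count $K_T\leq\sqrt{2SAT\log(T)}$. No gaps.
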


\begin{lemma}[Lemma 5 from \cite{TS_MDP_Ouyang_2017}]
\label{lm:R2}
\begin{align*}
    \ee\left [\sum_{k=1}^{K_T}\sum_{t=t_k}^{t_{k+1}-1}  \big( v_{\pi_k}(s_{t+1}, p_k) -\sum_{s' \in \mathcal S}p_k(s'|s_t, a_t)v_{\pi_k}(s',p_k)\big)\right ] \leq 49 H S\sqrt{AT\log(AT)}.
\end{align*}
\end{lemma}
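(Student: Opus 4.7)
The aim is to reduce the target expectation to two tractable pieces: a martingale whose expectation is zero, and a model-mismatch term controlled by $\ell_1$-concentration of the empirical transitions. Adding and subtracting the one-step expectation under the true kernel $p_{\ast}$,
\begin{align*}
v_{\pi_k}(s_{t+1},p_k) &- \sum_{s'} p_k(s'|s_t,a_t)\, v_{\pi_k}(s',p_k) \\
&= \underbrace{v_{\pi_k}(s_{t+1},p_k) - \sum_{s'} p_{\ast}(s'|s_t,a_t)\, v_{\pi_k}(s',p_k)}_{(\mathrm{I})_t} \\
&\quad + \underbrace{\sum_{s'} \bigl(p_{\ast}(s'|s_t,a_t) - p_k(s'|s_t,a_t)\bigr)\, v_{\pi_k}(s',p_k)}_{(\mathrm{II})_t}.
\end{align*}
Because $s_{t+1} \sim p_{\ast}(\cdot|s_t,a_t)$ while $(p_k,\pi_k,s_t,a_t)$ are all $\mathcal{F}_t$-measurable, $(\mathrm{I})_t$ is a martingale-difference sequence, so by the optional stopping theorem applied to the bounded stopping time $t_{K_T+1}-1 \leq T$, its total contribution vanishes in expectation.

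For $(\mathrm{II})_t$, I would recenter $v_{\pi_k}(\cdot,p_k)$ (which leaves the inner product invariant since both kernels are probability vectors) and use Assumption \ref{assum:WASP}: the posterior is supported on $\Omega_*$, so the bias $v_{\pi_k}(\cdot,p_k)$ has span at most $H$, yielding $|(\mathrm{II})_t| \leq H \norm{p_{\ast}(\cdot|s_t,a_t) - p_k(\cdot|s_t,a_t)}_1$. To bound this $\ell_1$ distance, I would reuse the Jaksch-style confidence set $B_k = \{p: \norm{p(\cdot|s,a) - \bar p_k(\cdot|s,a)}_1 \leq \beta_k(s,a)\ \forall (s,a)\}$ with $\beta_k(s,a) = \sqrt{14 S \log(2AT t_k)/\max\{1,N_{t_k}(s,a)\}}$. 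Lemma \ref{lm:jaksch_ci} forces $\prob(p_{\ast} \notin B_k)$ to be polynomially small, and since $B_k$ is $\mathcal{F}_{t_k}$-measurable, the posterior-sampling identity (Lemma \ref{lm:posterior_lemma}) transfers the identical bound to $p_k$, so $\norm{p_{\ast}-p_k}_1 \leq 2\beta_k$ on the intersection of the two good events. The complementary event contributes only $O(1)$ to the expectation because each episode has length at most $T$ and the failure probability sums to $O(\sum_k 1/t_k^6)$.

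Summing $2H \beta_k(s_t,a_t)$ across $(k,t)$ and invoking the standard pigeonhole inequality $\sum_{k,t} 1/\sqrt{\max\{1,N_{t_k}(s_t,a_t)\}} = O(\sqrt{SAT})$---which follows from the doubling criterion $N_t \leq 2 N_{t_k}$ together with the classical $\sum_{s,a}\sum_{n=1}^{N_T(s,a)} 1/\sqrt{n} \leq (\sqrt{2}+1)\sqrt{SAT}$ bound---yields the claimed order $HS\sqrt{AT\log(AT)}$, with the numerical constant $49$ arising from tracking $\sqrt{14}$, the $\sqrt{2}$ doubling overhead, and the $\sqrt{2}+1$ factor from the pigeonhole step.

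\textbf{Main obstacle.} The delicate point is the synchronization of $B_k$ between $p_{\ast}$ and $p_k$: the identity $\prob(p_k \notin B_k) = \prob(p_{\ast} \notin B_k)$ requires $B_k$ to be $\mathcal{F}_{t_k}$-measurable, which forces both $\bar p_k$ and $\beta_k$ to be frozen at the start of the episode rather than updated within it. Bookkeeping the doubling-trick factor $\sqrt{2}$ when converting $\sum 1/\sqrt{N_{t_k}}$ to $\sum 1/\sqrt{N_t}$, and justifying the martingale step with a proper optional-stopping argument in the presence of the random number of episodes $K_T$, are the remaining technical care-points, though neither is a genuine hurdle.
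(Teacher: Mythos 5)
Your argument is correct and is essentially the proof the paper relies on: the paper does not reprove this lemma but imports it as Lemma 5 of \cite{TS_MDP_Ouyang_2017}, whose proof is exactly your decomposition into a martingale-difference term (vanishing by optional stopping) plus a model-mismatch term bounded by $H$ times the $\ell_1$ deviation, controlled via the Jaksch-style confidence sets, the posterior-sampling identity of Lemma \ref{lm:posterior_lemma}, and the doubling/pigeonhole summation of $1/\sqrt{N_{t_k}(s_t,a_t)}$. The care-points you flag (the $\mathcal{F}_{t_k}$-measurability of $B_k$ and the $\sqrt{2}$ doubling overhead) are precisely the ones handled in that reference, so no gap remains.
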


\begin{lemma}[Lemma 17 from \cite{JMLR:v11:jaksch10a}]
\label{lm:jaksch_ci}
For any $t \geq 1$, the probability that the true MDP $M$ is not contained in the set of plausible MDPs $\mathcal{M}(t)$ defined as 
\begin{equation*}
    \mathcal{M}(t) = \left \{ (\mathcal{S}, \mathcal{A}, p^{\prime}, \textit{\textbf{c}}, \tau, \rho) : \norm{p^{\prime}(\cdot|s,a) - p_k(\cdot|s,a)}_1 \leq \sqrt{\frac{14S \log (2A t_k / \delta)}{\max \{1, N_{t_k}(s,a)\} }} \right \}
\end{equation*} 
at time $t$ is at most $\frac{\delta}{15t}$. That is $\prob \left \{ M \notin \mathcal{M}(t) \right \} < \frac{\delta}{15t^6}$.
\end{lemma}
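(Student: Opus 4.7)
The plan is to prove Lemma \ref{lm:jaksch_ci} by combining a sharp concentration inequality for the $L_1$ deviation of an empirical categorical distribution with a careful union bound over state-action pairs and visit counts, which is essentially the route taken by Jaksch, Ortner, and Auer. I would start with the Weissman inequality: for a fixed $(s,a)$, conditioned on having observed $n \geq 1$ i.i.d.\ transitions from $p_{\ast}(\cdot \mid s,a)$, the empirical estimator $\hat{p}_n(\cdot\mid s,a)$ satisfies
\begin{equation*}
    \prob\!\left(\,\norm{\hat{p}_n(\cdot\mid s,a) - p_{\ast}(\cdot\mid s,a)}_1 \geq \varepsilon\,\right) \leq (2^S - 2)\,\exp\!\left(-\tfrac{n \varepsilon^2}{2}\right).
\end{equation*}
This is the single-distribution building block; the remainder of the argument is turning it into a uniform-in-time guarantee that matches the radius in the definition of $\mathcal{M}(t)$.

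Next, for each fixed $(s,a)$ and each candidate value of the counter $n \in \{1,2,\dots,t\}$, I would substitute the radius $\varepsilon_n = \sqrt{14 S \log(2At/\delta)/n}$ that appears in the lemma. The exponent then simplifies as
\begin{equation*}
    \exp\!\left(-\tfrac{n \varepsilon_n^2}{2}\right) = \exp\!\left(-7S \log(2At/\delta)\right) = \left(\tfrac{\delta}{2At}\right)^{7S},
\end{equation*}
and bounding $2^S - 2 \leq 2^S$ makes the per-event probability at most $2^S (\delta/(2At))^{7S}$. A straightforward union bound over the $SA$ state-action pairs and the $t$ possible values of $N_{t}(s,a)$ then gives a total failure probability of at most $SA\cdot t \cdot 2^S (\delta/(2At))^{7S}$. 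The arithmetic step I would carry out carefully is showing that, because $7S \geq 7$ and $2At \geq 2$, this expression is dominated by $\delta/(15 t^6)$ (or the weaker $\delta/(15 t)$ stated in the lemma); essentially all the surplus powers of $2At$ in the denominator absorb both the $SA \cdot t \cdot 2^S$ prefactor and the $1/15$ constant.

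Finally, to convert this into the stopping-time statement the algorithm actually uses, I would note that the bound holds for every realization of $N_{t_k}(s,a)$ simultaneously and so, in particular, at the (random) stopping time $t_k$, and that $p_{\ast}$ lying in $\mathcal{M}(t)$ means exactly that the $L_1$ deviation at the realized count is within $\varepsilon_{N_{t_k}(s,a)}$ for every $(s,a)$. Replacing $\hat{p}_n$ by $p_k$ in the lemma's notation (an artefact of how the excerpt writes the confidence set) does not change the argument, since Weissman applies to the empirical mean that both notations refer to.

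The main obstacle is not the concentration inequality itself, which is a well-known off-the-shelf ingredient, but the bookkeeping of the union bound: one has to handle the dependence between the number of samples and the data (by union-bounding over every possible value of the counter rather than treating $n$ as deterministic), and one has to chase the polylogarithmic/polynomial factors carefully to obtain the clean $\delta/(15 t^6)$ target. Getting the constant $14$ inside the logarithm (rather than, say, $12$ or $16$) is what forces the specific choice $7S$ in the exponent and makes the final inequality tight enough to absorb the $SA \cdot t \cdot 2^S$ prefactor.
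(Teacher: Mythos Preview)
Your proposal is correct and faithfully reproduces the argument of Jaksch, Ortner, and Auer: Weissman's $L_1$ concentration inequality plus a union bound over the $SA$ state-action pairs and the at most $t$ possible values of the visit counter. The paper itself does not prove this lemma; it is listed among the auxiliary lemmas with the attribution ``Lemma~17 from \cite{JMLR:v11:jaksch10a}'' and no proof, so there is nothing to compare against beyond the original source, which your sketch matches. Your observation that the set $\mathcal{M}(t)$ should be centered at the empirical mean $\bar{p}_k$ rather than the sampled $p_k$ is also correct; this is a notational slip in the excerpt, and the paper's own use of the lemma in Section~\ref{sec:regret_bound} (via the set $B_k$ centered at $\bar{p}_k$) confirms that the empirical mean is what is intended.
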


\section{Experimental details}
\label{apx:experiments}
\subsection{Baselines: OFU-based algorithms}
\label{apx:benchmarks}

We use three OFU-based algorithms from the existing literature for comparison: \textsc{ConRL} \citep{NEURIPS2020_Brantley}, \textsc{C-UCRL} \citep{pmlr-v120-zheng20a}, and \textsc{UCRL-CMDP} \citep{Singh_CMDP_2020}. 
These algorithms rely on the knowledge of different CMDP components, e.g., \textsc{UCRL-CMDP} relies on knowledge of rewards $r$, whereas \textsc{C-UCRL} uses the knowledge of transitions $p$. To enable fair comparison, all algorithms were extended to the unknown reward/costs and unknown probability transitions setting. Specifically, we assume that each algorithm knows only the states space $\mathcal{S}$ and the action space $\mathcal{A}$, substituting the unknown elements with their empirical estimates:
\begin{equation}
    \label{mean_rewards}
    \Bar{r}_t(s,a) = \frac{\sum_{j=1}^{t-1} \mathbb{I} \{ s_t=s, a_t=a \} r_t}{N_t(s,a) \lor 1}, \quad \forall s \in \mathcal{S}, a \in \mathcal{A},
\end{equation}
\begin{equation}
    \label{mean_costs}
    \Bar{c}_{i,t}(s,a) = \frac{\sum_{j=1}^{t-1} \mathbb{I} \{ s_t=s, a_t=a \} c_{i,t}}{N_t(s,a) \lor 1}, \quad \forall s \in \mathcal{S}, a \in \mathcal{A}, \quad i = 1 \dots, m,
\end{equation}
\begin{equation}
    \label{mean_transitions}
    \Bar{p}_t(s,a,s') = \frac{N_t(s,a,s')}{N_t(s,a) \lor 1}, \quad \forall s, s' \in \mathcal{S}, a \in \mathcal{A}.
\end{equation}
where $r$ is the reward function (inverse main cost $c_0$) and $N_t(s,a)$ and $N_t(s,a,s')$ denote the number of visits to $(s,a)$ and $(s,a,s')$ respectively.

Further, we provide algorithmic-specific details separately for each baseline:
\begin{enumerate}
    \item \textsc{ConRL} implements the principle of optimism under uncertainty by introducing a bonus term $b_t(s,a)$ that favors under-explored actions with respect to each component of the reward vector. In the original work \cite{NEURIPS2020_Brantley}, the authors consider an episodic problem; they add a bonus to the empirical rewards (\ref{mean_rewards}) and subtract it from the empirical costs (\ref{mean_costs}):
    \begin{align*}
        \hat{r}_t(s,a) =\Bar{r}_t(s,a) + b_t(s,a)
            \quad\mathrm{and}\quad
        \hat{c}_t(s,a) =\Bar{r}_t(s,a) - b_t(s,a).
    \end{align*}
    We follow the same principle but recast the problem to the infinite-horizon setting by using the doubling epoch framework described in \cite{JMLR:v11:jaksch10a}.
    
    \item \textsc{C-UCRL} follows a principle of “optimism in the face of reward uncertainty; pessimism in the face of cost uncertainty.” This algorithm, 
    which was developed in \cite{pmlr-v120-zheng20a}, considers conservative (safe) exploration by overestimating both rewards and costs:
    \begin{align*}
        \hat{r}_t(s,a) =\Bar{r}_t(s,a) + b_t(s,a)
            \quad\mathrm{and}\quad
        \hat{c}_t(s,a) =\Bar{r}_t(s,a) + b_t(s,a).
    \end{align*}
    \textsc{C-UCRL} proceeds in episodes of linearly increasing number of rounds $kh$, where $k$ is the episode index and $h$ is the fixed duration given as an input. In each epoch, the random policy \footnote{Original algorithm utilizes a safe baseline during the first $h$ rounds in each epoch, which is assumed to be known. However, to make the comparison as fair as possible, we assume that a random policy is applied instead.} is executed for $h$ steps for additional exploration, and then policy $\pi_k$ is applied for $(k-1)h$ number of steps, making $kh$ the total duration of episode $k$. 
    
    \item Unlike the previous two algorithms, where uncertainty was taken into account by enhancing rewards and costs, \textsc{UCRL-CMDP} \cite{Singh_CMDP_2020} constructs confidence set $\mathcal{C}_t$ over $\Bar{p}_t$:
    \begin{equation*}
        \mathcal{C}_t = \left \{ p: |p(s,a,s') - \Bar{p}_t(s,a,s')| \leq b_t(s,a) \quad \forall (s,a) \right \}.
    \end{equation*}
    
    \textsc{UCRL-CMDP} algorithm proceeds in episodes of fixed duration of $\ceil*{T^{\alpha}}$, where $\alpha$ is an input of the algorithm. At the beginning of each round, the agent solves the following constrained optimization problem in which the decision variables are (i) Occupation measure $\mu(s,a)$, and (ii) “Candidate” transition $p'$:
    \begin{align}
        \max_{\mu, p' \in \mathcal{C}_t} \sum_{s,a} \mu(s,a) r(s,a), \label{eq1'}\\
        \mathrm{s.t.}\quad \sum_{s,a} \mu(s,a) c_i(s,a) \leq \tau_i,\, \quad i=1,\dots,m, \\
        \sum_a \mu(s,a) = \sum_{s', a} \mu(s', a) p'(s',a,s), \quad \forall s \in \mathcal{S}, \label{eq3'} \\
        \mu(s,a) \geq 0, \quad \forall (s,a) \in \mathcal{S} \times \mathcal{A}, \quad \sum_{s,a} \mu(s,a) = 1, \label{eq4'}
    \end{align}
    Note that program (\ref{eq1'})-(\ref{eq4'}) is not linear anymore as $\mu(s', a)$ is being multiplied by $p'(s',a,s)$ in equation (\ref{eq3'}). This is a serious drawback of \textsc{UCRL-CMDP} algorithm because, as we show later, program (\ref{eq1'})-(\ref{eq4'}) becomes computationally inefficient for even moderate problems.
\end{enumerate}

\noindent In all three cases, we use the original bonus terms $b_t(s,a)$ and refer to the corresponding papers for more details regarding the definition of these terms.

\subsection{Environments}
\label{subsec:envs}

We consider three gridworld environments in our analysis. There are four actions possible in each state, $\mathcal{A} = \{up, down, right, left\}$, which cause the corresponding state transitions, except that actions that would take the agent to the wall leave the state unchanged. Due to the stochastic environment, transitions are stochastic (i.e., even if the agent's action is to go \textit{up}, the environment can send the agent with a small probability \textit{left}). Typically, the gridworld is an episodic task where the agent receives cost 1 (equivalently reward -1) on all transitions until the terminal state is reached. We reduce the episodic setting to the infinite-horizon setting by connecting terminal states to the initial state. Since there is no terminal state in the infinite-horizon setting, we call it the goal state instead. Thus, every time the agent reaches the goal, it receives a cost of 0, and every action from the goal state sends the agent to the initial state. We introduce constraints by considering the following specifications of a gridworld environment: Marsrover and Box environments.

\paragraph{Marsrover.} This environment was used in \cite{tessler2018reward, pmlr-v120-zheng20a, NEURIPS2020_Brantley}. The agent must move from the initial position to the goal avoiding risky states. Figure (\ref{marsrover_envs}) illustrates the CMDP structure: the initial position is light green, the goal is dark green, the walls are gray, and risky states are purple. "In the Mars exploration problem, those darker states are the states with a large slope that the agents want to avoid. The constraint we enforce is the upper bound of the per-step probability of step into those states with large slope -- i.e., the more risky or potentially unsafe states to explore" \citep{pmlr-v120-zheng20a}.  Each time the agent appears in a purple state incurs an auxiliary cost of 1. Other states incur no auxiliary costs.

\begin{figure}[t]
  \centering
  \subfigure[Marsrover 4x4]{
    \centering
    \includegraphics[width=4cm]{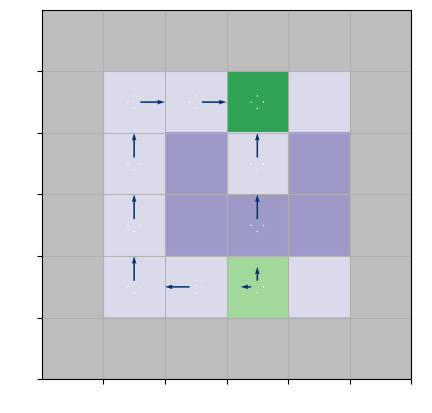}
    \label{4x4_marsrover}
    }
  \subfigure[Marsrover 8x8]{
    \centering
    \includegraphics[width=4cm]{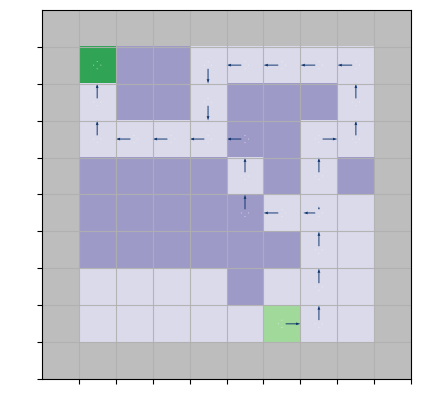}
    \label{8x8_marsrover}
  }
  \caption{Marsrover gridworlds. The initial position is light green, the goal is dark green, the walls are gray, and risky states are purple. Figure \ref{4x4_marsrover} illustrates 4x4 Marsrover environment. Figure \ref{8x8_marsrover} illustrates 8x8 Marsrover environment. In both cases, the agent's task is to get from the initial state to the goal state, and the optimal policy combines with some probabilities fast and safe ways, which are indicated by arrows on the pictures.}
\label{marsrover_envs}
\end{figure}

\begin{figure}[t]
  \centering
  \subfigure[Box (Main)]{
    \centering
    \includegraphics[width=3.8cm]{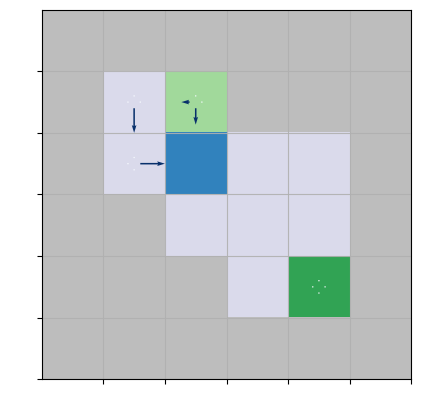}
    \label{box_main}
  }
  \subfigure[Box (Safe)]{
    \centering
    \includegraphics[width=3.8cm]{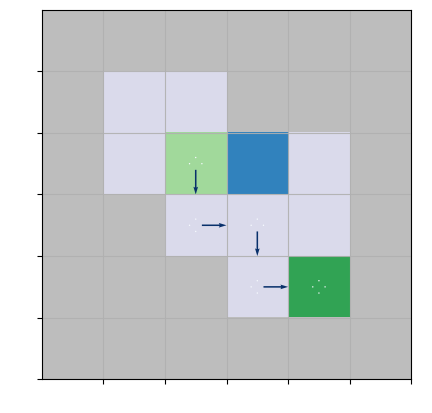}
    \label{box_left}
  }
  \subfigure[Box (Fast)]{
    \centering
    \includegraphics[width=3.8cm]{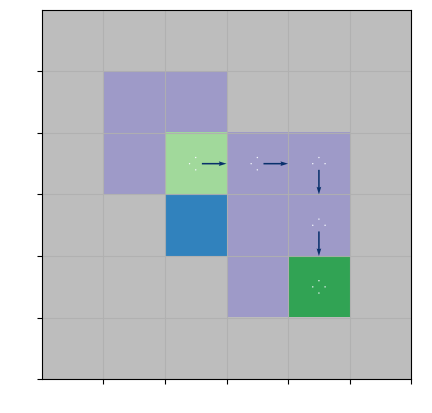}
    \label{box_down}
  }
  \caption{Box gridworld. The initial position is light green, the goal is dark green, the walls are gray, and risky states are purple. Figure \ref{box_main} illustrates the initial configuration. The agent's task is to get from the initial state to the goal state, and the optimal policy combines with some probabilities fast and safe ways, which are indicated by arrows on the pictures. Figure \ref{box_left}-\ref{box_down} illustrates safe and fast ways correspondingly.}
\label{box_envs}
\end{figure}

Without constraints, the optimal policy is to always go \textit{up} from the initial state. However, with constraints, the optimal policy is a randomized policy that goes \textit{left} and \textit{up} with some probabilities, as illustrated in Figure \ref{4x4_marsrover}. In experiments, we consider two marsrover gridworlds: 4x4, as shown in Figure \ref{4x4_marsrover}, and 8x8, depicted in Figure \ref{8x8_marsrover}.
\paragraph{Box.} Another conceptually different specification of a gridworld is Box environment from \cite{Leike_aisafegrid_2017}. Unlike the Marsrover example, there are no static risky states; instead, there is an obstacle, a box, which is only "pushable" (see Figure \ref{box_main}). Moving onto the blue tile (the box) pushes the box one tile into the same direction if that tile is empty; otherwise, the move fails as if the tile were a wall. The main idea of Box environment is "to minimize effects unrelated to their main objectives, especially those that are irreversible or difficult to reverse" \citep{Leike_aisafegrid_2017}. If the agent takes the fast way (i.e., goes down from its initial state; see Figure \ref{box_down}) and pushes the box into the corner, the agent will never be able to get it back, and the initial configuration would be irreversible. In contrast, if the agent chooses the safe way (i.e., approaches the box from the left side), it pushes the box to the reversible state (see Figure \ref{box_left}). This example illustrates situations of performing a task without breaking a vase, scratching the furniture, bumping into humans, etc.

Each action incurs an auxiliary cost of 1 if the box is in a corner (cells adjacent to at least two walls) and no auxiliary costs otherwise. Similarly to the Marsrover example, without safety constraints, the optimal policy is to take a fast way (go down from the initial state). However, with constraints, the optimal policy is a randomized policy that goes down and left from the initial state.

\subsection{Simulation results}

\begin{figure}
    \centering
    \includegraphics[width=0.85\textwidth]{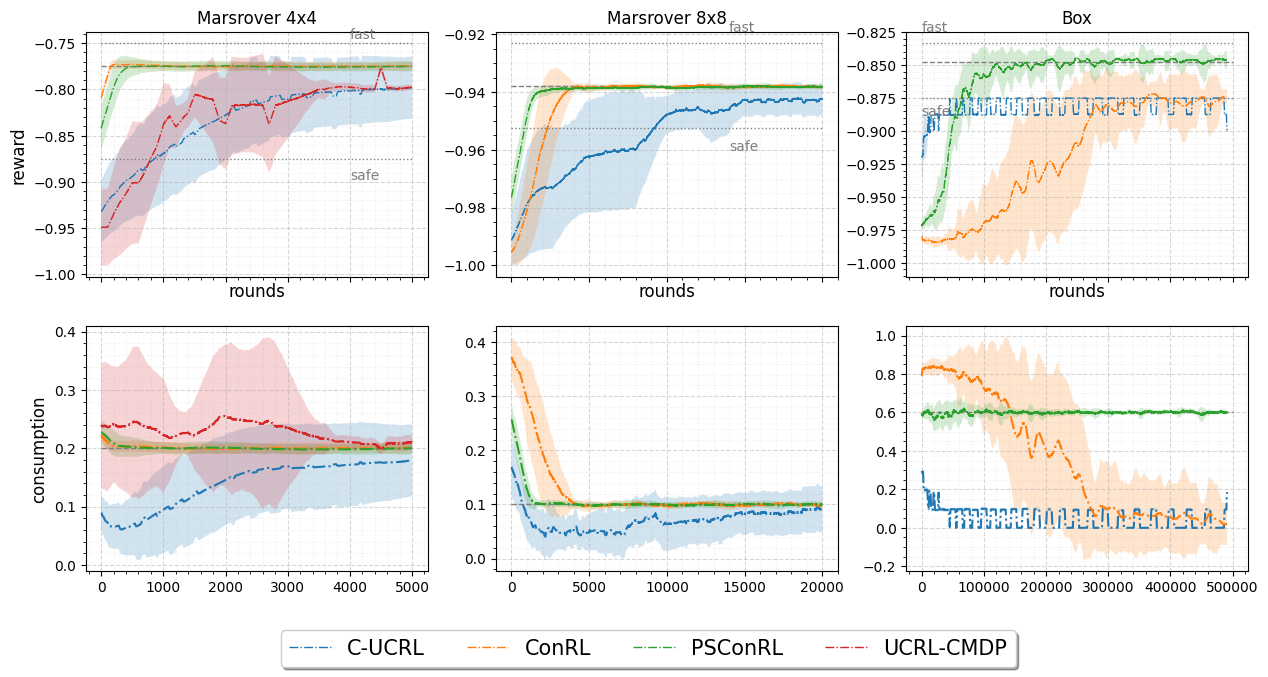}
    \caption{
    \textbf{(Top row)} shows the average reward (inverse average main cost); the dashed line shows the optimal behavior, and the dotted lines depict the reward level of safe and fast policies. \textbf{(Bottom row)} shows the average consumption of the auxiliary cost; the constraint thresholds are 0.2 for Marsrover 4x4, 0.1 for Marsrover 8x8, and 0.6 for Box.  Results are averaged over 100 runs for Marsrover 4x4 and over 30 runs for Marsrover 8x8 and Box.}
    \label{fig:sum_results_app}
\end{figure}

Figure \ref{fig:sum_results_app} shows the reward (inverse main cost) and average consumption (auxiliary cost) behavior of \textsc{PSConRL}, \textsc{ConRL}, \textsc{C-UCRL}, and \textsc{UCRL-CMDP} illustrating how the regret from Figure \ref{fig:sum_results} is accumulated. The top row shows the reward performance. The bottom row presents the average consumption of the auxiliary cost. 

Taking a closer look at Marsrover environments (left and middle columns), we see that all algorithms converge to the optimal solution (top row), and their average consumption (middle row) satisfies the constraints in the long run. In the Box example (right column), we see that \textsc{ConRL} and \textsc{C-UCRL} are stuck with the suboptimal solution. Both algorithms exploit safe policy once they have learned it, which corresponds to the near-linear regret behavior in Figure \ref{fig:sum_results}. Alternatively, \textsc{ConPSRL} converges to the optimal solution relatively quickly (middle and bottom graphs).
\end{document}